\def\eqref#1{equation~\ref{#1}}
\def\1{\bm{1}}
\def\vone{{\bm{1}}}
\def\vtheta{{\bm{\theta}}}
\def\vb{{\bm{b}}}
\def\vd{{\bm{d}}}
\def\vs{{\bm{s}}}
\def\vw{{\bm{w}}}
\def\vx{{\bm{x}}}
\def\vy{{\bm{y}}}
\def\mA{{\bm{A}}}
\DeclareMathAlphabet{\mathsfit}{\encodingdefault}{\sfdefault}{m}{sl}
\SetMathAlphabet{\mathsfit}{bold}{\encodingdefault}{\sfdefault}{bx}{n}
\def\gL{{\mathcal{L}}}
\def\gS{{\mathcal{S}}}
\def\sR{{\mathbb{R}}}
\newcommand{\sigmoid}{\sigma}
\newcommand{\softplus}{\zeta}
\DeclareMathOperator*{\argmin}{arg\,min}
\DeclareMathOperator{\sign}{sign}
\newcommand\figcaption{\def\@captype{figure}\caption}
\newcommand\tabcaption{\def\@captype{table}\caption}
\newtheorem{thm}{Theorem}
\newtheorem{cor}{Corollary}
\newtheorem{lem}{Lemma}
\title{A Unified Framework for Soft Threshold \\Pruning}
\author{Yanqi Chen\textsuperscript{\rm 1,3}, Zhengyu Ma\textsuperscript{\rm \textdagger3}, Wei Fang\textsuperscript{\rm 1,3}, Xiawu Zheng\textsuperscript{\rm 3}, Zhaofei Yu\textsuperscript{\rm 1,2,3}, Yonghong Tian\textsuperscript{\rm \textdagger1,3} \\
\textsuperscript{\rm 1}National Engineering Research Center of Visual Technology, School of Computer Science, \\
Peking University; \textsuperscript{\rm 2}Institute for Artificial Intelligence, Peking University;
\textsuperscript{\rm 3}Peng Cheng Laboratory\\
\texttt{yhtian@pku.edu.cn, mazhy@pcl.ac.cn}; (\textsuperscript{\rm \textdagger}Corresponding author)\\
}
\begin{document}

\maketitle

\begin{abstract}
Soft threshold pruning is among the cutting-edge pruning methods with state-of-the-art performance\footnote{For example, STR \citep{pmlr-v119-kusupati20a} is the first to achieve \textgreater50\% Top-1 accuracy of ImageNet on ResNet-50 under \textgreater99\% sparsity. STDS \citep{pmlr-v162-chen22ac} is the first pruning algorithm achieving acceptable performance degradation ($\sim 3\%$ under 88.8\% sparsity) for spiking neural networks with 18+ layers.}. However, previous methods either perform aimless searching on the threshold scheduler or simply set the threshold trainable, lacking theoretical explanation from a unified perspective. In this work, we reformulate soft threshold pruning as an implicit optimization problem solved using the \textit{Iterative Shrinkage-Thresholding} Algorithm (ISTA), a classic method from the fields of sparse recovery and compressed sensing. Under this theoretical framework, all threshold tuning strategies proposed in previous studies of soft threshold pruning are concluded as different styles of tuning $L_1$-regularization term.
We further derive an optimal threshold scheduler through an in-depth study of threshold scheduling based on our framework. This scheduler keeps $L_1$-regularization coefficient stable, implying a time-invariant objective function from the perspective of optimization. In principle, the derived pruning algorithm could sparsify any mathematical model trained via SGD. We conduct extensive experiments and verify its state-of-the-art performance on both Artificial Neural Networks (ResNet-50 and MobileNet-V1) and Spiking Neural Networks (SEW ResNet-18) on ImageNet datasets. On the basis of this framework, we derive a family of pruning methods, including sparsify-during-training, early pruning, and pruning at initialization. The code is available at \url{https://github.com/Yanqi-Chen/LATS}.
\end{abstract}

\section{Introduction}
Pruning has been a thriving area of network compression. Since the day deep neural networks stretch their tentacles to every corner of machine learning applications, the demand for shrinking the size of network parameters has never stopped growing. Fewer parameters usually imply less computing burden on resource-constrained hardware such as embedded devices or neuromorphic chips. Some pioneering studies have revealed considerable redundancies in both Artificial Neural Networks (ANNs) \citep{NIPS2015_ae0eb3ee,han2016deep,NIPS2016_41bfd20a,Liu_2017_ICCV} and Spiking Neural Networks (SNNs) \citep{ijcai2018p0221,ijcai2021p236,10.1145/3447548.3467252,9597482,Kundu_2021_WACV,kim2022lottery}.

In essence, pruning can be formulated as an optimization problem under constraint on $L_0$ norm, the number of nonzero components in network parameters. Assuming $\gL$ is the loss function of vectorized network weight $\vw$, we expect lower $L_0$ norm $\|\vw\|_0$ along with lower loss $\gL(\vw)$. Despite different formulations like hard constraints
\begin{equation}
	\min_{\gL(\vw)\leq c} \|\vw\|_0;
\end{equation}
\begin{equation}
	\min_{\|\vw\|_0\leq K} \gL(\vw);
\end{equation}
or soft constraints (penalized)
\begin{equation}\label{eq:penalized}
	\min_{\vw}\{\gL(\vw)+\mu\|\vw\|_0\},\\
\end{equation}
all these forms are without exception NP-Hard \citep{doi:10.1137/S0097539792240406,davis1997adaptive,9030937}. Relaxing $L_0$ norm to $L_p (0<p<1)$ norm will not make it more tractable for it is still strongly NP-Hard \citep{ge2011note}. Nowadays, research on pruning and sparse optimization is mainly focused on the $L_1$-regularized problem, the tightest convex relaxation of $L_0$ norm, which dates back to a series of groundbreaking studies on compressed sensing \citep{1614066,https://doi.org/10.1002/cpa.20124}. These researches technically allows us to solve $L_1$-regularized problem as an alternative or, sometimes even an equivalent option \citep{CANDES2008589} to confront $L_0$ norm constraint. A variety of modern methods such as magnitude-based pruning are still firmly rooted in solving the $L_1$ regularized optimization problem. Be that as it may, $L_1$ regularization is mostly employed for shrinking the magnitude of weight before the hard thresholding step, which has started to be replaced by other sorts of novel regularization \citep{NEURIPS2020_703957b6}.

In the past few years, a new range of pruning methods based on soft threshold reparameterization of weights has been developing gradually. The term ``reparameterization'' here refers to a specific mapping to network weights $\vw$ from a latent space of hidden parameters $\vtheta$. \textit{The ``geometry'' of latent space could be designed for guiding actual weights $\vw$ towards sparsity}. In soft threshold pruning, the mapping is an element-wise soft threshold function with time-variant threshold. Among these studies, two representative ones are \textbf{S}oft \textbf{T}hreshold weight \textbf{R}eparameterization (\textbf{STR}) \citep{pmlr-v119-kusupati20a} and \textbf{S}tate \textbf{T}ransition of \textbf{D}endritic \textbf{S}pines (\textbf{STDS}) \citep{pmlr-v162-chen22ac}.
They both achieve the best performance of that time. STDS further demonstrates the analogy between soft threshold mapping and a structure in biological neural systems, \textit{i.e.}, dendritic filopodia and mature dendritic spines. 
However, few researchers notice that soft threshold mapping also appear as the shrinkage operator in the solution of LASSO \citep{tibshirani1996regression} when the design matrix is orthonormal. The studies on LASSO further derives the \textit{Iterative Shrinkage-Thresholding} Algorithm (ISTA) \citep{daubechies2004iterative,4016294}, which used to be popularized in sparse recovery and compressed sensing.
ISTA has many variants \citep{4358846,doi:10.1137/080716542,5313948} and has long been certified as an effective sparsification methods in all sorts of fields like deep learning \citep{He_2017_ICCV,zhang2018learning,Bai_Wu_King_Lyu_2020}, computer vision \citep{5173518,6392274}, medical imageology \citep{https://doi.org/10.1002/mrm.21391,https://doi.org/10.1002/mrm.25240} and geophysics \citep{10.1111/j.1365-246X.2007.03698.x}. Despite an abecedarian analysis on the similarity between STDS and ISTA, many issues remains to be addressed, such as 1) the exact equivalence between ISTA and the growing threshold in soft threshold pruning, 2) the necessity of setting threshold trainable in STR, and 3) the way to improve existing methods without exhaustively trying different tricks for scheduling threshold.

In this work, we proposed a theoretical framework serving as a bridge between the underlying $L_1$-regularized optimization problem and threshold scheduling. The bridge is built upon the key finding that \textbf{\textit{soft threshold pruning is an implicit ISTA for nonzero weights}}. Specifically, we prove that the $L_1$ coefficient in the underlying optimization problem is determined by both threshold and learning rate. In this way, any threshold tuning strategy can now be interpreted as a scheme for tuning $L_1$ penalty. We find that a time-invariant $L_1$ coefficient lead to performance towering over previous pruning studies. Moreover, we bring a strategy of tuning $L_1$ penalty called continuation strategy \citep{ICML2012Xiao_447}, which was once all the rage in the field of sparse optimization, to the field of pruning. It derives broad categories of algorithms covering several tracks in the present taxonomy of pruning.
In brief, our contributions are summarized as follows:
\begin{itemize}
	\item \textbf{Theoretical cornerstone of threshold tuning strategy.} To the best of our knowledge, this is the first work interpreting increasing threshold as an ever-changing regularized term. Under theoretical analysis, we present a unified framework for the local equivalence of ISTA and soft threshold pruning. It enables us to make a comprehensive study on threshold tuning using the classic method in sparse optimization.
	\item \textbf{Learning rate adapted threshold scheduler.} Through our proposed framework, we reveal the strong relation between the learning rate scheduler and the threshold scheduler. Then we show that an time-invariant $L_1$ coefficient requires the changing of threshold being proportional to the learning rate. The \textit{Learning rate Adapted Threshold Scheduler} (LATS) built upon $L_1$ coefficient achieves a state-of-the-art performance-sparsity tradeoff on both deep ANNs and SNNs.
	\item \textbf{Sibling schedulers cover multiple tracks of pruning.} We propose an early pruning algorithm by translating the homotopy continuation algorithm into a pruning algorithm with our framework. It achieves indistinguishable performance to LATS as a conventional early pruning method. Moreover, the algorithm in the pruning-at-initialization setting erases some subsequent layers in ResNet and maintains the identity mapping, shrinking deep ResNet to a shallow one.
\end{itemize}

\section{Related works}
There has been a deluge of pruning algorithms emerged since the term ``deep compression'' was invented. These various studies emphasize different points like granularity (structured or unstructured) and stage of pruning (at initialization, during training, post training). The difference in granularity is similar to that between LASSO and group LASSO. Empirically, unstructured pruning tends to reach higher sparsity under the same accuracy degradation. For the pruning phase, sparsify during training commonly lead to 
higher accuracy than early phase one, \textit{e.g.}, pruning at initialization. Moreover, pruning during training is cheaper than post-training pruning when the overhead of dense training is considered. Some most relevant works are introduced as follows. 

\paragraph{Sparsify during training.} The terminology \textit{sparsify-during-training} (also called pruning-while-learning, pruning-during-training) is mentioned in \cite{JMLR:v22:21-0366}, which refers to pruning and training networks simultaneously including those iteratively pruned networks. Recent works in this area includes STR \citep{pmlr-v119-kusupati20a}, Top-KAST \citep{NEURIPS2020_ee76626e}, CS (Continuous Sparsification) \citep{NEURIPS2020_83004190}, RigL \citep{pmlr-v119-evci20a}, WoodFisher \citep{NEURIPS2020_d1ff1ec8}, PSGD \citep{NEURIPS2020_eb1e7832}, GraNet \citep{NEURIPS2021_5227b6aa}, Powerprop \citep{NEURIPS2021_f1e709e6}, ProbMask \citep{Zhou_2021_CVPR}, GPO \citep{WANG2022381}, OptG \citep{zhang2022optimizing} and STDS \citep{pmlr-v162-chen22ac}. Many of these works are based on the reparameterization of weights using either binary mask or element-wise nonlinear mapping. The former choose to confront $L_0$ constraint directly while the latter are committed to adjusting the landscape of loss function around zero. Our method is based on soft threshold reparameterization, which is piecewise linear and 
has an intrinsic connection to the ISTA with $L_1$ regularization.
\paragraph{Early pruning.} We refer to a variant of sparsify-during-training as early pruning here, which only exerts pruning to network in the early stage of training. It includes pruning at initialization, \textit{e.g.}, GraSP \citep{Wang2020Picking}, SynFlow \citep{NEURIPS2020_46a4378f}, SBP-SR \citep{hayou2021robust}, ProsPr \citep{alizadeh2022prospect}, and the conventional early pruning methods which stop pruning after several epochs of training \citep{You2020Drawing,pmlr-v139-liu21y,pmlr-v162-rachwan22a}. Most of these works are inspired by the discovery of Lottery Ticket Hypothesis (LTH) \citep{frankle2018the} or SNIP \citep{lee2018snip}, if not both. LTH suggests we can find a sparse subnetwork with comparable performance to the dense network after iterative retraining, while SNIP manages to train from an initial sparse network. A wide array of criteria like connectivity sensitivity are taken for finding such sparse networks in the early stage with promising accuracy after training.
\section{Preliminaries}

\paragraph{Notation.} We use $|\vx|$ to denote the element-wise absolute value of $\vx$, and $\|\vx\|_p$ denotes the $p$-norm of $\vx$. $\vx \odot \vy$ denotes the element-wise product of $\vx$ and $\vy$. If not otherwise specified, the superscript within parenthesis $\vx^{(i)}$ denotes $\vx$ at $i$-th iteration of gradient descent. The element-wise sigmoid function is denoted by $\sigma(\vx)_i:=1/(1+e^{-x_i})$. The \textit{soft threshold mapping} is also an element-wise mapping defined by $\gS_{d}(\vx)_i:=\sign(x_i)\cdot\max\left\lbrace |x_i|-d,0\right\rbrace$ with scalar threshold $d$.
\subsection{Soft threshold pruning}
Basically, soft threshold pruning will iteratively execute following three core steps:
\begin{enumerate}[label = (\roman*)]
	\item \textbf{Mapping hidden weight to actual weight} $\vw$ through the \textit{soft threshold mapping} $\vw^{(t)}\leftarrow \gS_{d}(\vtheta^{(t)})$ during training, where $\vtheta$ is a trainable hidden weight with the same shape as $\vw$.
	\item \textbf{Training hidden weight $\vtheta$} through backpropagation in latent space
	\item \textbf{Growing threshold} $d$ pushes the term $\max\left\lbrace |\theta_i|-d,0\right\rbrace$ in $\gS_{d}(\vtheta)$ towards zero and thereby enforces sparsity for $\vw$.
\end{enumerate}
\newcommand{\strcolor}{SkyBlue}
\newcommand{\stdscolor}{Lavender}

\DeclareRobustCommand{\ctext}[2]{%
	\begingroup
	\sethlcolor{#1}%
	\hl{#2}%
	\endgroup
}
\DeclareRobustCommand{\str}[1]{\ctext{\strcolor}{$\displaystyle #1$}}
\DeclareRobustCommand{\strtext}[1]{\ctext{\strcolor}{#1}}
\DeclareRobustCommand{\stds}[1]{\ctext{\stdscolor}{$\displaystyle #1$}}
\DeclareRobustCommand{\stdstext}[1]{\ctext{\stdscolor}{#1}}
\begin{algorithm}[ht]
	\scriptsize
	\caption{The general form of soft threshold pruning algorithm coupled with vanilla SGD (\strtext{STR} and \stdstext{STDS} for instance).}
	\label{alg:stp}
	\begin{algorithmic}[1]
		\Require initialized network parameters $\vw^{(0)}$, {\stdstext{threshold scheduler function $g(\cdot)$, initial threshold $d^{(0)}$, final threshold $D$}}, \strtext{initial learnable parameter $s_{\text{init}}$,} loss function $\gL(\vw)$, the number of training iterations $T$, $L_2$ penalty $\lambda$.
		\Ensure trained sparse parameters $\vw^{(T)}$
		\State \str{\vs^{(0)}\leftarrow s_{\text{init}},\vd^{(0)}\leftarrow\sigma(\vs^{(0)})}$,\vtheta^{(0)}\leftarrow \vw^{(0)}$ \Comment{Initialization of weight and threshold}
		\For {$t=0,1,\dots,T-1$}
		\State $\Delta\vtheta^{(t)}\leftarrow\nabla_{\vw}(\gL(\vw^{(t)}))$\str{\odot\vone_{|\vtheta|>\vd^{(t)}}} \Comment{Computing gradient with respect to hidden weight}
		\State $\vtheta^{(t+1)}\leftarrow \vtheta^{(t)}-\eta^{(t)}(\Delta\vtheta^{(t)}+\lambda \vtheta^{(t)})$ \Comment{Update hidden weight. $\eta^{(t)}$ is learning rate} \label{alg:stp:line:update-theta}
		\State \str{\vs^{(t+1)}\leftarrow\vs^{(t)}-\eta^{(t)}(\nabla_{\vs}(\gL(\vw;\vs^{(t)}))+\lambda\vs^{(t)})} \Comment{\strtext{Threshold training in STR}}
		\State $\vd^{(t+1)}\leftarrow$\str{\sigma(\vs^{(t+1)})}\stds{g((t+1)/T)\cdot D} \Comment{Update threshold}
		\State $\vw^{(t+1)}\leftarrow \gS_{\vd^{(t+1)}}(\vtheta^{(t+1)})$ \Comment{Update the actual weight}\label{alg:stp:line:update-weight}
		\EndFor
		\State \textbf{return} $\vw^{(T)}$
	\end{algorithmic}
\end{algorithm}
We provide a general form of soft threshold pruning as Algorithm \ref{alg:stp}, which is a prototype of both STR and STDS. GPO change the mapping in line \ref{alg:stp:line:update-weight} to a convex combination of soft threshold and identity mapping, leading to a slight difference. However, GPO only obtain marginal performance improvement with respect to STR, and will soon degenerate to STR as discussed in Appendix \ref{sec:train-threshold}. Therefore, the following of this paper are focused on STR and STDS. The differences between them are concluded from two aspects.
\paragraph{Gradient computing.} With the reparameterization mapping $\vw^{(t)}\leftarrow \gS_{d}(\vtheta^{(t)})$, the forward step includes mapping via $\vtheta\to\vw$ and evaluating loss with $\vw$ as the actual network weight. Hence, gradient is backpropagated via path $\gL\to\vw\to\vtheta$. The learning rule is thus used for updating $\vtheta$ instead of $\vw$. Note that $\gS_{d}$ is non-differentiable at $\pm d$ and has zero gradient in interval $(-d,d)$. STR takes advantage of the subgradient at $\pm d$ and leaves zero gradients as it is. STDS views $\gS_{d}$ as an identity mapping during backward, and provides a convergence analysis by approximating $\gS_{d}$ using a smooth surrogate $\hat{\gS}_{d}(x)=\frac{1}{\alpha}[\softplus(\alpha(x-d))-\softplus(-\alpha(x+d))]$, where $\softplus(x):=\log(1+e^x)$ denotes the softplus function.

\paragraph{Threshold tuning.}These studies spontaneously try manipulating threshold $d$ for different sparsity. STR assigns an independent threshold for each layer, resulting into a threshold vector $\vd$. Besides, STR further parameterizes the threshold by another trainable parameter $\vs$. The mapping from $\vs$ to $\vd$ is individually designed for CNN and RNN. Compared to STR, STDS set threshold manually by introducing the \textbf{threshold scheduler} as $d^{(t)}=g(t/T)\cdot D$, wherein \textbf{scheduler function} $g:[0,1]\to[0,1]$ is increasing and satisfies $g(0)=0,g(1)=1$, $T$ is the total training steps. The formulation is based on the idea that increasing threshold from 0 to $D$ could follow different paths. The \textbf{final threshold} $D$ is the only adjustable hyperparameter for different sparsity levels when $g$ is given. Larger $D$ always leads to higher sparsity in practice.

The above techniques are summarized in Tab.~\ref{tab:previous-summarize}.
\vspace{-0.5cm}
\begin{table}[ht]
	\centering
	\caption{Techniques used in previous studies.}
	\label{tab:previous-summarize}
	\resizebox{\textwidth}{!}
	{\begin{tabular}{ccccc}
			\toprule
			Method & \makecell{Reparameterization mapping \\ $\vw(\vtheta,d)$} & Gradient of mapping & Threshold & Note \\
			\midrule
			\makecell{STR \\ \citep{pmlr-v119-kusupati20a}} & $\vw=\gS_{d}(\vtheta)$ & Subgradient & $d:=\begin{cases}
				\sigmoid(s), & \textnormal{for CNN}\\
				e^s, & \textnormal{for RNN}\\
			\end{cases}$ & \makecell{$s$ is layer-wise \\ (global for STR-GS), \\ trainable, \\ and $L_2$ regularized}\\
			\makecell{GPO \\ \citep{WANG2022381}} & $\vw=(1-k)\gS_{d}(\vtheta)+k\vtheta$ & Gradient & $d:=\sigmoid(s)$ & \makecell{$s,k$ are layer-wise, \\ trainable, \\ and $L_2$ regularized}\\
			\makecell{STDS \\ \citep{pmlr-v162-chen22ac}} & $\vw=\gS_{d}(\vtheta)$ & $\vone$ (Viewed as identity) & $d^{(t)}=g(t/T)\cdot D$ & \makecell{$g$ is increasing, \\ $g(0)=0,g(1)=1$} \\
			\bottomrule
	\end{tabular}}
\end{table}
\subsection{Iterative shrinkage-thresholding algorithm}
ISTA is initially derived from solving linear inverse problem with regularization $\min_{\vx\in\sR^n}\left\lbrace \|\mA\vx-\vb\|_2^2+r(\vx)\right\rbrace$, where $\mA\in\sR^{m\times n}$ and $\vb\in\sR^m$. ISTA is later extended to general objective as $\min_{\vx\in\sR^n}\left\lbrace F(\vx):=f(\vx)+r(\vx)\right\rbrace$ with assumptions listed below
\begin{enumerate}[label = (\roman*)]
	\item Objective function $f:\sR^n\to\sR$ is continuous differentiable, and $L$-smooth, \textit{i.e.}, $\|\nabla f(\vx)-\nabla f(\vy)\|_2\leq L_f\|\vx-\vy\|_2,\forall \vx,\vy\in\sR^n$, where $L_f>0$ is the Lipschitz constant of $\nabla f$.
	\item Regularization function $r:\sR^n\to\sR$ is continuous convex and can be nonsmooth.
	\item $F$ is bounded from below.
\end{enumerate} 
Leave the regularization $r(\vx)$ alone, applying vanilla SGD to $f$ can be viewed as iteratively calculating proximal regularization \citep{martinet1970regularisation} of the linearized $f$ at $\vx$, which is suggested by the following fact: $\vx-\eta\nabla f(\vx)=\argmin_{\vy}\lbrace f(\vx)+\langle\vy-\vx,\nabla f(\vx)\rangle+\frac{1}{2\eta}\|\vy-\vx\|_2^2\rbrace$,
where $\eta$ is explained as ``stepsize'' in optimization or ``learning rate'' in the context of deep learning.
For a given point $\vx$, $F(\vy)$ can be approximated by expanding $f$ to the quadratic term in a similar vein
\begin{equation}\label{eq:expand}
	\hat{F}_{\eta}(\vy;\vx):=f(\vx)+\langle\vy-\vx,\nabla f(\vx)\rangle+\frac{1}{2\eta}\|\vy-\vx\|_2^2+r(\vy),
\end{equation}
The above problem admits a unique minimizer 
\begin{equation}\label{eq:explicit}
	\argmin_{\vy}\hat{F}_{\eta}(\vy;\vx)=\argmin_{\vy}\left\lbrace\frac{1}{2\eta}\|\vy-(\vx-\eta\nabla f(\vx)) \|_2^2+ r(\vy)\right\rbrace.
\end{equation}

Note that $\eta=\frac{1}{L_f}$ gives a upperbound of $f(\vy)$, which is obtained by the descent lemma $f(\vy)\leq f(\vx)+\langle\vy-\vx,\nabla f(\vx)\rangle+ \frac{L_f}{2}\|\vy-\vx\|_2^2$ \citep{beck2017first}.
It implies with proper choice of $\eta$, we are virtually optimizing the upperbound of $F(\vy)$ using minimizer in Eq.~\ref{eq:explicit}. The general form of ISTA iteratively solves Eq.~\ref{eq:explicit} as $\vx^{(t+1)}=\argmin_{\vx}\hat{F}_{\eta}(\vx;\vx^{(t)})$,
which is also known as proximal gradient methods \citep{doi:10.1137/050626090}. The detailed convergence analysis is presented in vast optimization literature like FISTA \citep{doi:10.1137/080716542} and GIST \citep{pmlr-v28-gong13a}.
For sparsity, we are interested in $L_1$ regularization term $r(\vx)=\mu\|\vx\|_1,\mu>0$. Since $L_1$ norm is separable, we have a closed-form solution with element-wise soft threshold operation as
\begin{equation}\label{eq:l1}
	\vx^{(t+1)}=\gS_{\mu\eta}(\vx^{(t)}-\eta\nabla f(\vx^{(t)})).
\end{equation}
Eq.~\ref{eq:l1} gives the ISTA update rule under $L_1$ regularization.

\section{A framework for soft threshold pruning}
In this part, we will formulate the growing threshold under soft threshold reparameterization as an implicit ISTA. For simplicity, we assume the threshold is global across all parameters, which is consistent with the setting of STDS and STR-GS, \textit{i.e.}, the global threshold version of STR. Hence, we use scalar $d$ rather than vector $\vd$ in Algorithm \ref{alg:stp} to denote the global threshold. 

To begin with, we investigate the update rule of nonzero components in actual weight
\begin{align}
	\label{eq:update1} \vtheta^{(t+1)}&\leftarrow \vtheta^{(t)}-\eta^{(t)}\nabla_{\vw}(\gL(\vw^{(t)}))\odot \nabla_{\vtheta}(\vw(\vtheta^{(t)},d^{(t)})), &\text{Line \ref{alg:stp:line:update-theta} in Algorithm \ref{alg:stp}},\\
	\label{eq:update2} \vw^{(t+1)}&\leftarrow \gS_{d^{(t+1)}}(\vtheta^{(t+1)}), &\text{Line \ref{alg:stp:line:update-weight} in Algorithm \ref{alg:stp}}.
\end{align}

Assuming the sign of weight remains unchanged after an update, which happens when the gradient has the opposite sign of weight or the gradient magnitude is sufficiently small, we have the following Lemma:
\begin{lem}[Local update rule]
	\label{lem:update}
	The update rule in Eq.~\ref{eq:update1} and Eq.~\ref{eq:update2} implies the following update of a nonzero component $w(\theta,d)$ in actual weight $\vw$
	\begin{equation}
		\label{eq:equiv}
		w^{(t+1)}=\gS_{d^{(t+1)}-d^{(t)}}(w^{(t)}-\eta^{(t)}\nabla_{w}(\gL(w^{(t)}))).
	\end{equation}
\end{lem}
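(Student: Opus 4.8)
The plan is to collapse the two-stage hidden-weight update (Eq.~\ref{eq:update1}--\ref{eq:update2}) into a single soft-threshold step on the actual weight, by exploiting that the reparameterization is affine on the nonzero region. First I would reduce to one scalar component with $w^{(t)}\neq 0$, i.e.\ $|\theta^{(t)}|>d^{(t)}$. On this region the mapping is $w=\gS_{d}(\theta)=\theta-\sign(\theta)\,d$, so its derivative with respect to $\theta$ equals $1$; this is precisely the value assigned by both conventions in the paper (STR's subgradient away from $\pm d$, and STDS's identity-during-backward), so $\nabla_{\theta}w(\theta^{(t)},d^{(t)})=1$ and Eq.~\ref{eq:update1} simplifies to $\theta^{(t+1)}=\theta^{(t)}-\eta^{(t)}\nabla_{w}\gL(w^{(t)})$.

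Next I would rewrite the updated hidden weight so that $w^{(t)}$ appears explicitly. Writing $g:=\nabla_{w}\gL(w^{(t)})$ and using $\theta^{(t)}=w^{(t)}+\sign(\theta^{(t)})\,d^{(t)}$ (valid on the nonzero region), I obtain
\begin{equation*}
	\theta^{(t+1)}=\bigl(w^{(t)}-\eta^{(t)}g\bigr)+\sign(\theta^{(t)})\,d^{(t)}=u+\sign(w^{(t)})\,d^{(t)},
\end{equation*}
where $u:=w^{(t)}-\eta^{(t)}g$ and I used $\sign(\theta^{(t)})=\sign(w^{(t)})$ (the two coincide when $|\theta^{(t)}|>d^{(t)}$). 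Substituting into Eq.~\ref{eq:update2} gives $w^{(t+1)}=\gS_{d^{(t+1)}}\!\bigl(u+\sign(w^{(t)})\,d^{(t)}\bigr)$, so the claim reduces to the soft-threshold shift identity $\gS_{D}\!\bigl(u+\sign(w^{(t)})\,c\bigr)=\gS_{D-c}(u)$ with $c=d^{(t)}$, $D=d^{(t+1)}$.

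The crux, and the step I expect to be the main obstacle, is establishing this shift identity, since it fails in general when $u$ and $\sign(w^{(t)})\,c$ point in opposite directions. Here the hypothesis of the lemma does the work: ``the sign of the weight is unchanged after an update'' means $\sign(u)=\sign(w^{(t)})$ (the gradient either opposes $w^{(t)}$, enlarging its magnitude, or is small enough not to cross zero). With $\sign(u)=\sign(w^{(t)})$ and $0\le c\le D$ (the threshold is non-decreasing, so $D-c\ge 0$), I would verify the identity by taking $\sign(w^{(t)})=+1$ without loss of generality (the soft-threshold is odd), whence $u\ge 0$, $u+c\ge 0$, and both sides equal $\max\{u+c-D,\,0\}$; the case $\sign(w^{(t)})=-1$ follows by symmetry. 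Chaining the equalities then yields $w^{(t+1)}=\gS_{d^{(t+1)}-d^{(t)}}(u)=\gS_{d^{(t+1)}-d^{(t)}}\!\bigl(w^{(t)}-\eta^{(t)}\nabla_{w}\gL(w^{(t)})\bigr)$, which is Eq.~\ref{eq:equiv}. I would close by remarking that this single formula covers both outcomes automatically: the component survives when $|u|>d^{(t+1)}-d^{(t)}$ and is pruned to zero otherwise, so no separate case split on whether $w^{(t+1)}$ vanishes is needed.
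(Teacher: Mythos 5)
Your proposal is correct and follows essentially the same route as the paper's proof: both express $\theta^{(t)}=w^{(t)}+\sign(w^{(t)})d^{(t)}$ on the nonzero region (where the mapping's gradient is $1$), use the sign-preservation hypothesis to conclude that the SGD target $u=w^{(t)}-\eta^{(t)}\nabla_w\gL(w^{(t)})$ shares the sign of $w^{(t)}$, and then collapse $\gS_{d^{(t+1)}}\bigl(u+\sign(u)d^{(t)}\bigr)$ into $\gS_{d^{(t+1)}-d^{(t)}}(u)$ via the identity $|u+\sign(u)d^{(t)}|=|u|+d^{(t)}$. The only difference is cosmetic — you isolate the shift identity and argue it by symmetry, while the paper carries out the same sign/absolute-value algebra inline.
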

The formal version and proof of Lemma \ref{lem:update} is given in Appendix \ref{sec:proof}. Note that the form in Eq.~\ref{eq:equiv} is equivalent to the ISTA update rule with threshold equal to forward finite difference $d^{(t+1)}-d^{(t)}$. Recall Eq.~\ref{eq:l1}, the corresponding optimization problem can be deduced as Theorem \ref{thm:threshold-scheduler}:

\begin{thm}
	\label{thm:threshold-scheduler}
	Let $\gL(\vw)$ be the loss function depending on the network weight $\vw$, which is further reparamterized by hidden weight $\vtheta$ and threshold $d$ as $\vw(\vtheta,d)=\gS_{d}(\vtheta).$
	When applying vanilla SGD to reparameterized nonzero weight, the update rule is locally equivalent to solving the following problem using ISTA with penalty term
	\begin{equation}
		\min_{\vw}\left\{F(\vw):=\gL(\vw)+\frac{d^{(t+1)}-d^{(t)}}{\eta^{(t)}}\|\vw\|_1\right\},
	\end{equation}
	where $\eta^{(t)},d^{(t)}$ denotes the learning rate and threshold at the $t$-th iteration, respectively. The magnitude of $L_1$-penalty is the quotient of forward finite difference of threshold scheduler divided by learning rate.
\end{thm}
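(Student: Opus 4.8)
The plan is to obtain Theorem \ref{thm:threshold-scheduler} as a direct consequence of Lemma \ref{lem:update}, matching its local update rule against the $L_1$-regularized ISTA step recorded in Eq.~\ref{eq:l1}. The genuine mathematical content has already been isolated in the lemma; what remains for the theorem is an identification of parameters, so the argument should be short and essentially algebraic.

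First I would take the conclusion of Lemma \ref{lem:update}, namely that each surviving nonzero component obeys
$$w^{(t+1)}=\gS_{d^{(t+1)}-d^{(t)}}\bigl(w^{(t)}-\eta^{(t)}\nabla_{w}\gL(w^{(t)})\bigr),$$
and place it beside the generic ISTA update for an $L_1$ penalty $r(\vx)=\mu\|\vx\|_1$ from Eq.~\ref{eq:l1}, namely $\vx^{(t+1)}=\gS_{\mu\eta}(\vx^{(t)}-\eta\nabla f(\vx^{(t)}))$. Setting $f=\gL$, $\eta=\eta^{(t)}$, and reading $\vx$ as the scalar $w$ makes the inner gradient-descent arguments coincide exactly, so only the shrinkage magnitudes are left to reconcile.

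Second, I would match those magnitudes. The ISTA step shrinks by the product $\mu\eta$ of the $L_1$ coefficient and the stepsize, while the lemma shrinks by the forward finite difference $d^{(t+1)}-d^{(t)}$. Equating $\mu\,\eta^{(t)}=d^{(t+1)}-d^{(t)}$ and solving yields $\mu=\frac{d^{(t+1)}-d^{(t)}}{\eta^{(t)}}$. Substituting this $\mu$ into $F(\vw)=\gL(\vw)+\mu\|\vw\|_1$ recovers precisely the objective in the statement, and invoking the preliminary fact that Eq.~\ref{eq:l1} is exactly the ISTA iterate for this $F$ closes the argument.

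The main obstacle is conceptual rather than computational: the equivalence is only \emph{local} and \emph{instantaneous}. It inherits from Lemma \ref{lem:update} the standing assumption that each component keeps its sign across the update (so the reparameterization acts linearly away from zero and the thresholds subtract cleanly), and it holds componentwise only for currently nonzero weights, where treating the soft-threshold gradient as an indicator/identity collapses the reparameterized SGD step to a plain gradient step on $w$. I would also flag that $\mu=\mu^{(t)}$ generically varies with $t$, so strictly speaking one identifies, at each iteration, the instantaneous penalized problem whose single ISTA iterate reproduces the observed update—the derived coefficient being time-varying precisely because the threshold schedule and learning rate are. Stating this carefully, rather than claiming a single fixed objective is minimized throughout, is the point that needs the most care.
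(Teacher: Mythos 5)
Your proposal matches the paper's own argument: Theorem \ref{thm:threshold-scheduler} is deduced exactly by placing the local update rule of Lemma \ref{lem:update} beside the $L_1$ ISTA step in Eq.~\ref{eq:l1} and identifying $\mu\eta^{(t)}=d^{(t+1)}-d^{(t)}$, so that $\mu=(d^{(t+1)}-d^{(t)})/\eta^{(t)}$. Your added caveats about the sign-preservation assumption and the instantaneous, time-varying nature of the identified penalty are consistent with the paper's qualifiers (``locally equivalent,'' nonzero components only) and do not change the argument.
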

Theorem \ref{thm:threshold-scheduler} serves as the glue holding the learning rate scheduler $\eta^{(t)}$, threshold $d^{(t)}$ and penalty $\mu^{(t)}$ together. It also to some extent justify the increasing threshold for a positive penalty term. Under this framework, we further explain in Appendix \ref{sec:stds} that the original STDS uses an improper threshold scheduler. Moreover, the validity of training threshold is discussed in Appendix \ref{sec:train-threshold}.
\section{Finding optimal threshold scheduler}
In this part, we are devoted to evaluating some threshold schedulers based on our theoretical framework and literature on sparse optimization. With the framework, any past strategy of tuning $L_1$ penalty can be converted to a feasible pruning algorithm today.
\subsection{Learning rate adapted threshold scheduler}\label{sec:lats}
If the optimization problem in Theorem \ref{thm:threshold-scheduler} is fixed, or in other words, the $L_1$ penalty is invariant $\mu^{(t)}\equiv \mu$ during learning, we can deduce a unique threshold scheduler \textbf{LATS}. In LATS, \textit{the change on threshold $d^{(t+1)}-d^{(t)}$ must be proportional to the learning rate $\eta^{(t)}$} during training.

\begin{cor}[\textbf{L}earning rate \textbf{A}dapted \textbf{T}hreshold \textbf{S}cheduler]\label{cor:lats}
	For fixed $L_1$ regularized problem
	\begin{equation}
		\min_{\vw}\left\{F(\vw):=\gL(\vw)+\mu\|\vw\|_1\right\},
	\end{equation}
	where $\mu>0$ is the time-independent $L_1$ penalty coefficient, the threshold scheduler is governed by the learning rate scheduler as $	d^{(t)}=d^{(0)}+\mu\sum\limits_{i=0}^{t-1}\eta^{(i)}$
\end{cor}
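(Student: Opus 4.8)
The plan is to treat this corollary as an almost immediate consequence of Theorem~\ref{thm:threshold-scheduler}, reading that theorem's identification of the $L_1$-penalty coefficient not as a conclusion but as a \emph{constraint} that any threshold schedule must satisfy, and then solving the resulting recurrence for $d^{(t)}$.

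First I would invoke Theorem~\ref{thm:threshold-scheduler}, which establishes that the locally-equivalent ISTA problem at iteration $t$ carries $L_1$-penalty coefficient $\mu^{(t)} = (d^{(t+1)}-d^{(t)})/\eta^{(t)}$. The hypothesis of the corollary is precisely that this objective is held fixed throughout training, i.e.\ $\mu^{(t)} \equiv \mu$ for all $t$. Substituting and clearing the denominator converts this time-invariance condition into the first-order recurrence $d^{(t+1)} - d^{(t)} = \mu\,\eta^{(t)}$, which expresses that the per-step threshold increment must be exactly proportional to the learning rate at that step. This is the content of the informal statement ``the change on threshold must be proportional to the learning rate'' in the surrounding text.

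Second, I would solve the recurrence by telescoping. Summing the increment identity over $i = 0, 1, \dots, t-1$ collapses the left-hand side to $d^{(t)} - d^{(0)}$, while the right-hand side becomes $\mu \sum_{i=0}^{t-1} \eta^{(i)}$; rearranging yields the claimed closed form $d^{(t)} = d^{(0)} + \mu \sum_{i=0}^{t-1}\eta^{(i)}$. Because the recurrence is linear and determined by the single initial datum $d^{(0)}$, this solution is unique, which justifies the earlier phrasing that a fixed penalty ``deduces a unique threshold scheduler.''

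The algebra here is routine; the only point demanding genuine care — and hence the main obstacle, such as it is — is delimiting the scope of validity rather than any estimate. The formula for $\mu^{(t)}$ in Theorem~\ref{thm:threshold-scheduler} rests on the local equivalence to ISTA, which in turn inherits from Lemma~\ref{lem:update} the assumption that the sign of each nonzero weight is preserved across the update. Consequently the derived schedule realizes a truly time-invariant objective only on the support where that sign-preservation assumption holds at every step, and I would state this restriction explicitly rather than assert a global equivalence. Beyond that caveat, no smoothness or convexity machinery from the ISTA convergence theory is needed: the corollary is a direct algebraic inversion of the penalty formula followed by a telescoping sum.
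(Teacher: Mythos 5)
Your proposal is correct and follows exactly the route the paper intends: read Theorem~\ref{thm:threshold-scheduler}'s coefficient $\mu^{(t)}=(d^{(t+1)}-d^{(t)})/\eta^{(t)}$ as a constraint, set it constant to obtain $d^{(t+1)}-d^{(t)}=\mu\,\eta^{(t)}$, and telescope. Your explicit caveat about the sign-preservation hypothesis inherited from Lemma~\ref{lem:update} is a welcome addition, since the paper leaves that scope restriction implicit in the phrase ``locally equivalent.''
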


For the most frequently used testbed, ResNet-50 \citep{He_2016_CVPR} on ImageNet dataset \citep{5206848}, researchers usually use the cosine annealing learning rate scheduler \citep{loshchilov2017sgdr} with $\eta_{\min}=0$. Assuming the initial threshold is zero, simple algebra shows the definition of corresponding LATS with a slight abuse of notation 
\begin{equation}\label{eq:lats}
	d^{(n,b)}=\mu\eta_{\max}\left[ \frac{B}{4}\left(2n+1+\frac{\sin\left(\frac{2n-1}{2N}\pi \right) }{\sin\frac{\pi}{2N}} \right)+\frac{b}{2}\left(1+\cos\frac{n\pi}{N} \right)\right],  
\end{equation}
where the threshold $d^{(n,b)}$ depends on epoch id $n$ and batch id $b$. Here $n=0,1,\ldots,N-1$ denotes the current id of training epoch, and $b=1,2,\ldots, B$ denotes the batch id in $n$-th epoch. We elaborate the detailed derivation of Eq.~\ref{eq:lats} in Appendix \ref{sec:detail-derivation}.

\subsection{Simplified threshold scheduler}\label{sec:s-lats}
Computation such as Eq.~\ref{eq:lats} is intricate for implementation. In effect, painstakingly coding LATS according to given learning rate scheduler is not inevitable. To ease the computing burden, we turn to replacing sum of learning rate with integration of learning rate function.

To be specific, assuming the learning rate scheduler can be expressed by $\eta^{(n,b)}=h(n/N)$, the simplified threshold scheduler is defined by 
\begin{equation}\label{eq:simp}
    d^{(t)}=d^{(N-1,B)}\cdot\frac{\int_{0}^{\frac{t}{T}} h(x)\mathrm{d}x}{\int_{0}^{1} h(x)\mathrm{d}x}.
\end{equation}
The simplification is loyal to the idea that the value of Riemann integral could be approximated by rectangle method. 
Eq.~\ref{eq:simp} can be interpreted with scheduler form in STDS as $d^{(t)}=g(t/T)\cdot D$ with scheduler function $g(x)=\int_{0}^{\frac{t}{T}} h(x)\mathrm{d}x/\int_{0}^{1} h(x)\mathrm{d}x$ and final threshold $D=d^{(N-1,B)}$. The detailed derivation of Eq.~\ref{eq:simp} is given in Appendix \ref{sec:detail-derivation}. 

Now we have the \textbf{Simplified LATS} (\textbf{S-LATS} for short) for the cosine annealing learning rate scheduler $h(x)=\frac{\eta_{\max}}{2}(1+\cos(\pi x))$
\begin{equation}
	d^{(t)}=\frac{\mu\eta_{\max}T}{2}\cdot\frac{\int_{0}^{\frac{t}{T}} \frac{1}{2}(1+\cos\pi x)\mathrm{d}x}{\int_{0}^{1} \frac{1}{2}(1+\cos\pi x)\mathrm{d}x} = \frac{\mu\eta_{\max}T}{2}\left[ \frac{1}{\pi}\sin(\frac{t\pi}{T})+\frac{t}{T}\right]. 
\end{equation}
The final threshold is $D=\mu\eta_{\max}T/2$, which satisfies $D\propto \mu$. Tuning $D$ is thus akin to changing the magnitude of penalty. In the following discussion, we employ the final threshold $D$ instead of $d^{(N-1,B)}$ to lighten the notation. The threshold schedulers in the rest of this work will thus be expressed in a unified form of $d^{(t)}=g(t/T)\cdot D$.
\begin{table}[ht]
	\vspace{-0.5cm}
	\caption{Comparison of LATS and S-LATS when applied to ResNet-50 on ImageNet dataset.}
	\label{tab:simp-lats}
	\centering
	\resizebox{0.7\textwidth}{!}
	{\begin{tabular}{ccccc}
		\toprule
		\multirow{2}*{Final threshold $D$} & \multicolumn{2}{c}{\textbf{STDS + LATS}} & \multicolumn{2}{c}{\textbf{STDS + S-LATS}} \\
		\cmidrule(lr){2-3}\cmidrule(lr){4-5}
		& Sparsity (\%) & Top-1 Acc. (\%) & Sparsity (\%) & Top-1 Acc. (\%) \\
		\midrule
		0.1 & 79.97 & 76.53 & 79.95 & 76.75 \\
		0.5 & 95.54 & 73.12 & 95.53 & 73.03 \\
		1.0 & 97.43 & 69.56 & 97.43 & 69.64 \\
		5.0 & 99.27 & 53.80 & 99.28 & 53.64 \\
		\bottomrule
	\end{tabular}}
\end{table}
We evaluate LATS and S-LATS under identical final thresholds $D=0.1,0.5,1.0,5.0$, which can be gleaned from Tab.~\ref{tab:simp-lats}. The results show LATS and S-LATS are indistinguishable from accuracy and sparsity. Thus, we turn to a simplified threshold scheduler in the following discussion. 
\subsection{Continuation strategy}
Also known as ``warm starting'', \textit{continuation strategy} is designated for accelerating convergence. Similar to annealing of learning rate, continuation refers to gradually reducing $L_1$ penalty during learning. It is also explained in \cite{doi:10.1137/070698920} as an analogue to the homotopy algorithms in statistics. Continuation method used to serve as a common trick in abundant classic literature concerning sparse optimization including GPSR \citep{4407762}, fixed point continuation method \citep{doi:10.1137/070698920}, SpaRSA \citep{4799134} and NESTA \citep{doi:10.1137/090756855}. 

\subsubsection{PGH scheduler}In the series works of \textit{proximal gradient homotopy} (PGH) \citep{ICML2012Xiao_447,doi:10.1137/120869997,pmlr-v32-lin14}, the researchers provide proof of geometric convergence
rate when inducing exponentially decaying $L_1$ coefficient $\mu^{(t)}=\beta^{\frac{t}{T}}$, where $0<\beta<1$ is a constant. Considering our formulation of soft threshold pruning in Theorem \ref{thm:threshold-scheduler}, PGH can be translated into the \textbf{PGH scheduler} (simplified using Eq.\ref{eq:simp}), which can be written as
\begin{equation}
	d^{(t)}=g_{\text{PGH}}(t/T)\cdot D:=\frac{\int_{0}^{\frac{t}{T}}\frac{1}{2}(1+\cos\pi x)\beta^x\mathrm{d}x}{\int_{0}^{1}\frac{1}{2}(1+\cos\pi x)\beta^x\mathrm{d}x}\cdot D,	
\end{equation}
where the analytic form of $g_{\text{PGH}}$ is shown below 
\begin{equation}
	g_{\text{PGH}}(x)=\frac{\pi ^2 \left(\beta ^x-1\right)+\log ^2(\beta ) \left(\beta ^x-2\right)+\log (\beta ) \beta ^x (\log (\beta ) \cos (\pi  x)+\pi  \sin (\pi  x))}{\pi ^2 (\beta -1)-2 \log ^2(\beta )}.
\end{equation}

\subsubsection{Link to early pruning}
For the PGH scheduler, we interpolate $\beta$ between 0 and 1 and get a series of different PGH schedulers. As shown in Fig.~\ref{fig:pgh-func}, the increasing in threshold slows over time, which is caused by decaying $L_1$ penalty. 
For $0<\beta<1$, if pruning is ignored when penalty is below a preset threshold, we get a family of early pruning algorithms. They will stop pruning at different stages. Recall that the regularized term is proportional to the forward finite difference and thus can be approximated by derivative, for conventional early pruning, we regard $g'_{\text{PGH}}(t/T)<0.1$ as the termination criterion of pruning. 
\begin{wrapfigure}[7]{r}{0.3\textwidth}
	\centering
	\includegraphics[width=0.3\textwidth]{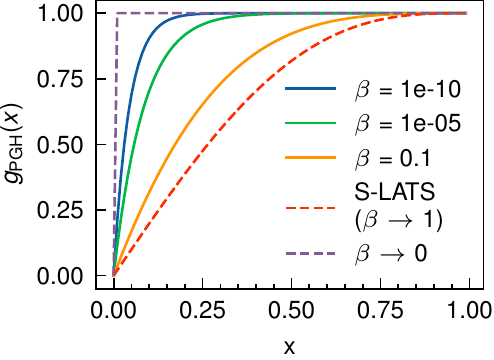}
	\vspace{-0.7cm}
	\caption{PGH schedulers under different $\beta$.}
	\label{fig:pgh-func}
\end{wrapfigure}
There are two limit cases when $\beta$ approaches 0 or 1. It is obvious that $\beta\to 1$ leads to S-LATS for no decay is applied. When $\beta\to 0$, the penalty is always zero except for the beginning. In this case, PGH scheduler degenerates to a magnitude-based pruning after weight initialization followed by a normal training stage. This is also referred to as \textit{sparse-to-sparse training} or \textit{pruning at initialization} method.

\section{Experiments}
In this section, we test proposed threshold scheduler S-LATS on both deep ANNs and SNNs. The favorable performances against the previous studies are confirmed. In all experiments we switch sparisty levels by changing $D$, which is equivalent to tuning $L_1$ penalty coefficient. We also tune hyperparameter $\beta$ in PGH scheduler to maintain different phases of early pruning algorithm. Compared to the dense baseline, no tuning on other training hyperparameters is needed, which minimize the effort when applying to other networks.

\subsection{S-LATS} 
S-LATS achieves state-of-the-art performances on both ANNs (ResNet-50, MobileNet-V1 \citep{howard2017mobilenets}) and SNNs (SEW ResNet-18 \citep{NEURIPS2021_afe43465}). The results on ResNet-like networks and MobileNet-V1 are illustrated in Fig.~\ref{fig:resnet50-acc-sparsity-imagenet}. We add Gradual Magnitude Pruning (GMP) \citep{zhu2017prune} into comparison for few recent studies are conducted on MobileNet-V1. Notably, our method surges ahead of all the other baselines under \textless98\% sparsity for pruning on ResNet-50, which is shown in Tab.~\ref{tab:resnet50-acc-sparsity-imagenet}. It should be noted that the origin STDS excludes the last FC layer from pruning in SNNs, while the results reported here are shown by rerunning it with the last layer pruned.
\vspace{-0.5cm}
\begin{table}[ht]
	\caption{Comparison of ResNet-50 Top-1 accuracy on ImageNet in recent studies using 100 training epochs.
	For some studies without strict control on sparsity, the closest sparsity is reported behind the performance. 
	Performance of S-LATS is averaged over three trials.}
	\label{tab:resnet50-acc-sparsity-imagenet}
	\centering
	\resizebox{\textwidth}{!}
	{\begin{tabular}{ccccccccc}
			\toprule
			\multirow{2}*{Method} & \multirow{2}*{Batch size} &  \multicolumn{7}{c}{Sparsity} \\
			\cmidrule(lr){3-9}
			& & 80\% & 90\% & 95\% & 96.5\% & 97.5\% & 98\% & 99\% \\
			\midrule
			STR & 256 & 76.19 (79.55) & 74.31 (90.23) & 70.40 (95.03) & 67.22 (96.53) & -
			& 61.46 (98.05) & 51.82 (98.98) \\
			STR-GS & 256 & - & 74.13 (89.54) & - & - & - & 62.17 (97.91) & - \\
			GraNet & 256 & 76 & 74.5 & 72.3 & 70.5 & - & - & - \\
			ProbMask & 256 & - & 74.68 & 71.5 & - & - & 66.83 & 61.07 \\
			OptG & 256 & - & 74.28 & 72.38 & 70.85 & - & 67.2 & \textbf{62.1} \\
			WoodFisher & 256 & \textbf{76.73} & 75.26 & 72.16 & - & - & 65.47 & - \\ 
			\textbf{S-LATS} & 256 &  76.57\textsubscript{\textpm 0.15} (79.95) & \textbf{75.43\textsubscript{\textpm 0.17}} (89.57) & \textbf{73.20\textsubscript{\textpm 0.09}} (95.12) & \textbf{71.48\textsubscript{\textpm 0.13}} (96.58) & \textbf{69.49\textsubscript{\textpm 0.18}} (97.43) & \textbf{67.25\textsubscript{\textpm 0.19}} (98.01) & 58.39\textsubscript{\textpm 0.25} (99.02) \\
			\midrule
			\textbf{S-LATS} & 1024 & \textbf{76.61\textsubscript{\textpm 0.25}} (79.00) & \textbf{75.87\textsubscript{\textpm 0.15}} (90.15) & \textbf{74.29\textsubscript{\textpm 0.28}} (95.01) & \textbf{72.80\textsubscript{\textpm 0.18}} (96.53) & \textbf{70.78\textsubscript{\textpm 0.09}} (97.54) & \textbf{69.15\textsubscript{\textpm 0.13}} (98.00) &  \textbf{61.90\textsubscript{\textpm 0.18}} (98.93) \\
			\midrule
			RigL (ERK) & 4096 & 75.1 & 73.0 & 69.7 & 67.2 & - & - & - \\
			\makecell{Top-KAST\\(Powerprop)} & 4096 & 76.24 & 75.23 & 73.25 & - & - & - & - \\
			\makecell{Top-KAST\\(Powerprop+ERK)} & 4096 & 76.76 & 75.74 & - & - & - & - & - \\
			\bottomrule
	\end{tabular}}
\end{table}
We also admit it fails to achieve comparable performance with a few baselines like OptG and in extreme sparsity ($\geq$99\%), which suggests the theory might be imperfect under such conditions. An ablation study shows S-LATS outperforms the default threshold scheduler of STDS, which is shown in Appendix \ref{sec:ablation}.
\begin{figure}[ht]
	\centering
	\centerline{\includegraphics[width=\textwidth]{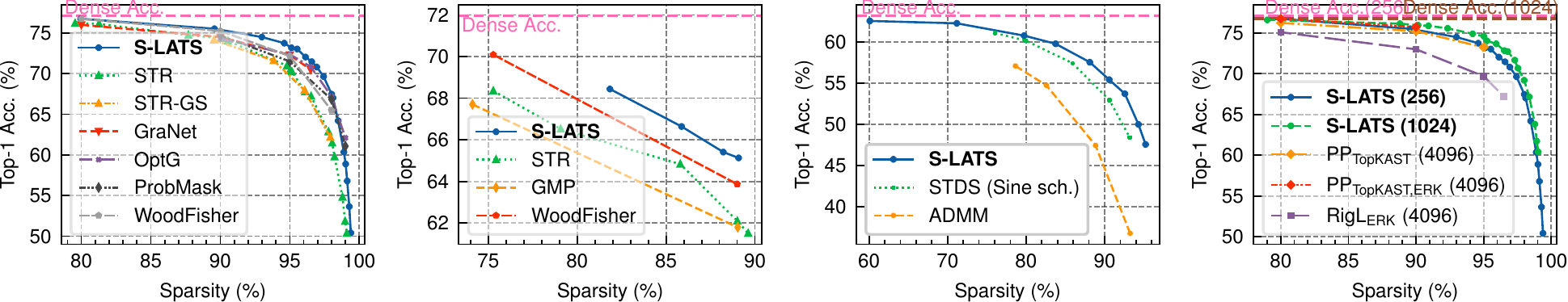}}
	\vspace{-0.25cm}
	\caption{Performance of several SOTA pruning strategies of \textbf{ResNet-50 (Leftmost \& Rightmost)}, \textbf{MobileNet-V1 (Middle left)} and \textbf{SEW ResNet-18 (Middle right)} on ImageNet. All trials uses the standard training setting (256 batch size) except the rightmost one, which uses an enlarged batch size marked in parenthesis. Detailed layerwise sparsity and accuracy are given in Appendix \ref{sec:budget}.}
	\label{fig:resnet50-acc-sparsity-imagenet}
	\vspace{-0.5cm}
\end{figure}
\paragraph{Pruning using large batch size.} Powerprop \citep{NEURIPS2021_f1e709e6},  
adopts a batch size of 4096 for pruning of ResNet-50 and achieves fascinating performances under high sparsity. Hence, we explore the large batch size setting. Due to limited resources, we only increase it to 1024 and enlarge the learning rate correspondingly. The rightmost of Fig.~\ref{fig:resnet50-acc-sparsity-imagenet} shows it indeed leads to higher performance, which outperforms all other SOTA studies.
Astonishingly, even though the performance of the dense network slightly degrades, the accuracy of the sparse ones is improved overall. On the basis of the above finding, we believe applying an even larger batch size, like 4096 used in Powerprop, to our method may lead to top-notch performance tradeoff.
\subsection{PGH scheduler}
\paragraph{Conventional early pruning} Our experiments of early pruning includes $\beta=0.1,10^{-5},10^{-10}$. The corresponding ending criteria $t/T=0.743,0.382,0.231$ are given by numerical solution. It indicates pruning roughly stops at the 74th, the 38th and the 23th epoch. The network using PGH scheduler with smaller $\beta$ converges faster to a sparse one, which is illustrated in the left of Fig.~\ref{fig:pgh}.
\begin{figure}[ht]
	\centering
	\centerline{\includegraphics[width=\textwidth]{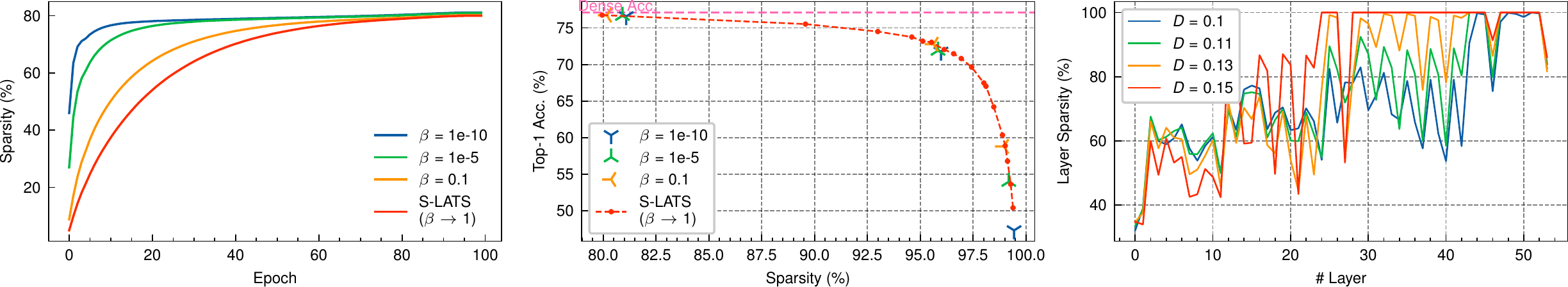}}
	\caption{Overall sparsity during learning when final threshold $D=0.1$ (\textbf{Left}). Performance under different sparsity levels (\textbf{Middle}). Layerwise sparsity of PGH scheduler under pruning at initialization setting $\beta\to 0$ (\textbf{Right}).}
	\label{fig:pgh}
\end{figure}
Surprisingly, we find in the middle of Fig.~\ref{fig:pgh} that for different $\beta$, the datapoint of accuracy against sparsity almost lies on the curve of S-LATS. It suggests these schedulers have practically the same performances as S-LATS, but with faster convergence to sparse networks. With the help of PGH scheduler, we are able to find sparse networks earlier with negligible performance degradation.
\paragraph{Pruning at initialization}
We also try $\beta\to 0$, which refers to increasing the threshold to its maximum at the first iteration. Note that our method is agnostic about the structure of network. 
Hence, some layers are completely pruned as shown 
in the right of Fig.~\ref{fig:pgh}, 
\begin{wraptable}[8]{r}{0.6\textwidth}
	\vspace{-0.5cm}
	\caption{Results in pruning at initialization experiments.}
	\label{tab:pat-acc-s}
	\centering
	\resizebox{0.6\textwidth}{!}
	{\begin{tabular}{ccccc}
		\toprule
		Final threshold $D$ & 0.1 & 0.11 & 0.13 & 0.15 \\
		\midrule
		Overall sparsity (\%) & 87.16 & 90.00 & 93.11 & 95.64 \\
		Top-1 Acc. (\%) & 74.69 & 72.89 & 68.23 & 62.22 \\
		\# Zeroed layers & 0 & 9 & 9 & 27 \\
		\bottomrule
	\end{tabular}}
\end{wraptable}
wherein three consecutive layers within a residual block tend to be pruned simultaneously. However, owing to the skip connection in ResNet, the feature can still pass through shortcuts to the final FC layer, and thus the whole networks are still normally trained. The aforementioned results are collected in Tab.~\ref{tab:pat-acc-s}.
\section{Conclusion \& Discussion}
In this work, we present a framework interpreting increasing threshold as a constantly changing penalty term and reveal the underlying connection between soft threshold pruning and ISTA. We also derive a couple of threshold schedulers, which achieve comparable performance to current SOTA works and cover multiple tracks of pruning. It is worth noting that our method is agnostic about the object of pruning. This design endows our method with versatility while treating weight wheresoever equally, and yet becomes totally ignorant of nowadays pruning researches like sparsity budget allocation, \textit{e.g.}, \textit{Erd\H{o}s-R\'enyi} \citep{mocanu2018scalable} and \textit{Erd\H{o}s-R\'enyi-Kernel (ERK)} \citep{pmlr-v119-evci20a}, or commonsense in this area like ``Leave at least one path from input through output''. We believe our method can for sure further benefit from knowledge in the prosperous field of network pruning.

\subsubsection*{Acknowledgments}
This work is supported by grants from the National Key R\&D Program of China under Grant 2020AAA01035, the Key-Area Research Development Program of Guangdong Province (2021B0101400002), and the National Natural Science Foundation of China under contract No.~62088102, No.~62176003, No.~62006132, No.~62027804 and No.~61825101. The computing resources of Pengcheng Cloudbrain are used in this research.

\bibliography{iclr2023_conference}
\bibliographystyle{iclr2023_conference}

\appendix
\setcounter{lem}{0}
\section{Proof of theorems and lemmas}
For clarity, we restate the theorem or lemma in the main text again here.
\label{sec:proof}
\begin{lem}[Local update rule]
	The update rule below
	\begin{align}
		\label{eq:update1-appendix} \vtheta^{(t+1)}&\leftarrow \vtheta^{(t)}-\eta^{(t)}\nabla_{\vw}(\gL(\vw^{(t)}))\odot \nabla_{\vtheta}(\vw(\vtheta^{(t)},d^{(t)}))\\
		\label{eq:update2-appendix} \vw^{(t+1)}&\leftarrow \gS_{d^{(t+1)}}(\vtheta^{(t+1)})
	\end{align}
	imply the following update of any nonzero component $w(\theta,d)$ in actual weight $\vw$
	\begin{equation}
		w^{(t+1)}=\gS_{d^{(t+1)}-d^{(t)}}(w^{(t)}-\eta^{(t)}\nabla_{w}(\gL(w^{(t)}))).
	\end{equation}
	when $|\theta^{(t+1)}|>d^{(t)}$ and the sign condition  $\sign(\theta^{(t+1)})=\sign(\theta^{(t)})$ are met.
\end{lem}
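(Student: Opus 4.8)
The plan is to reduce everything to a single scalar coordinate, since the soft-threshold mapping $\gS_d$, the element-wise product $\odot$, and both update rules in Eq.~\ref{eq:update1-appendix}--\ref{eq:update2-appendix} act component-wise; it therefore suffices to track one nonzero entry $w = w(\theta,d)$ and its hidden counterpart $\theta$. First I would fix such a coordinate and abbreviate the corresponding scalar gradient by $g := \nabla_{w}\gL(w^{(t)})$, so that the element-wise product reduces to ordinary multiplication for this coordinate.

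Next I would compute the gradient of the mapping on the nonzero region. Because $w^{(t)}$ is nonzero we have $|\theta^{(t)}| > d^{(t)}$, and on this region $\gS_{d^{(t)}}$ is differentiable in $\theta$ with derivative exactly $1$, the active branch being $\theta\mapsto\theta - d^{(t)}\sign(\theta)$. This is precisely where the subgradient used by STR and the identity surrogate used by STDS coincide, so $\nabla_{\vtheta}(\vw(\vtheta^{(t)},d^{(t)}))=1$ and Eq.~\ref{eq:update1-appendix} collapses to the plain step $\theta^{(t+1)} = \theta^{(t)} - \eta^{(t)} g$. I would also record the explicit form $w^{(t)} = \theta^{(t)} - d^{(t)}\sign(\theta^{(t)})$, valid for the same reason.

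The crux is a single algebraic identity for the soft-threshold operator, which I would state and prove as an intermediate claim: for all $a,b\ge 0$ and all $u$ with $|u| > b$,
\begin{equation}
	\gS_a(u) = \gS_{a - b}\!\left(u - b\,\sign(u)\right).
\end{equation}
I would verify this by splitting on $\sign(u)$; in each case the hypothesis $|u| > b$ guarantees that the shifted argument $u - b\sign(u)$ retains the sign of $u$, after which both sides reduce to $\sign(u)\max\{|u| - a, 0\}$ by direct computation. I expect this case analysis to be the main (and essentially the only) obstacle, since one must confirm that the shift does not flip the active branch of $\gS$ --- and this is exactly what the sign hypothesis is there to control.

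Finally I would assemble the pieces. Setting $u = \theta^{(t+1)}$, $a = d^{(t+1)}$, $b = d^{(t)}$, the stated hypotheses $|\theta^{(t+1)}| > d^{(t)}$ and $\sign(\theta^{(t+1)}) = \sign(\theta^{(t)})$ are precisely what is needed to apply the identity and to rewrite the shift: using $\sign(u)=\sign(\theta^{(t)})$ together with $\theta^{(t+1)} = \theta^{(t)} - \eta^{(t)} g$ and the recorded form of $w^{(t)}$,
\begin{equation}
	\theta^{(t+1)} - d^{(t)}\sign(\theta^{(t)}) = \left(\theta^{(t)} - d^{(t)}\sign(\theta^{(t)})\right) - \eta^{(t)} g = w^{(t)} - \eta^{(t)} g.
\end{equation}
Hence $w^{(t+1)} = \gS_{d^{(t+1)}}(\theta^{(t+1)}) = \gS_{d^{(t+1)} - d^{(t)}}(w^{(t)} - \eta^{(t)}\nabla_{w}\gL(w^{(t)}))$, which is the claim. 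I would close by noting that since the threshold grows, $d^{(t+1)} \geq d^{(t)}$, the effective threshold $d^{(t+1)} - d^{(t)}$ is nonnegative, so the right-hand side is a genuine shrinkage step of the form in Eq.~\ref{eq:l1}.
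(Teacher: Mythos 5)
Your proposal is correct and follows essentially the same route as the paper's proof: reduce to a single nonzero coordinate where the mapping's derivative is $1$, write $w^{(t)}=\theta^{(t)}-d^{(t)}\sign(\theta^{(t)})$, and use the hypotheses $|\theta^{(t+1)}|>d^{(t)}$ and $\sign(\theta^{(t+1)})=\sign(\theta^{(t)})$ to show the SGD target $w^{(t)}-\eta^{(t)}\nabla_w\gL(w^{(t)})$ keeps the sign of $\theta^{(t+1)}$, so that the two thresholds collapse into the single difference $d^{(t+1)}-d^{(t)}$. Your packaging of the key step as the standalone shift identity $\gS_a(u)=\gS_{a-b}(u-b\,\sign(u))$ for $|u|>b$ is just a cleaner factoring of the chain of $\max$ manipulations the paper carries out inline.
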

\begin{proof}
	For any nonzero weight $w^{(t)}\neq 0$, $\theta^{(t)}=w^{(t)}+d^{(t)}\sign(w^{(t)})$, using Eq.~\ref{eq:update1-appendix} we have
	\begin{equation}\label{eq:update-nonzero}
		\theta^{(t+1)}=w^{(t)}+d^{(t)}\sign(w^{(t)})-\eta^{(t)}\nabla_{w}(\gL(w^{(t)}))
	\end{equation}
	Let $\bar{w}^{{(t+1)}}:=w^{(t)}-\eta^{(t)}\nabla_{w}(\gL(w^{(t)}))$ be the target point of vanilla SGD without regularization. Recall Eq.~\ref{eq:update-nonzero}, we have
	\begin{equation}
		\label{eq:rewrite-target}
		\begin{split} \bar{w}^{{(t+1)}}&=\theta^{(t+1)}-d^{(t)}\sign(w^{(t)})\\
			&=\sign(\theta^{(t+1)})|\theta^{(t+1)}|-\sign(\theta^{(t)})d^{(t)}\\
			&=\sign(\theta^{(t+1)})|\theta^{(t+1)}|-\sign(\theta^{(t+1)})d^{(t)}\\
			&=\sign(\theta^{(t+1)})(|\theta^{(t+1)}|-d^{(t)}),
		\end{split}
	\end{equation}
	which has the same sign as $\theta^{(t+1)}$. Now we have $\sign(w^{(t+1)})=\sign(\theta^{(t+1)})=\sign(\theta^{(t)})=\sign(w^{(t)})=\sign(\bar{w}^{{(t+1)}})$.
	
	To evaluate the updated weight, by Eq.~\ref{eq:update2-appendix}, Eq.~\ref{eq:update-nonzero}, Eq.~\ref{eq:rewrite-target} and the definition of soft threshold mapping, we derive
	\begin{equation}\label{eq:ista}
		\begin{split}
			w^{(t+1)}&=\sign(\theta^{(t+1)})\max\{|w^{(t)}+d^{(t)}\sign(w^{(t)})-\eta^{(t)}\nabla_{w}(\gL(w^{(t)}))|-d^{(t+1)},0\}\\
			&=\sign(\theta^{(t+1)})\max\{|\bar{w}^{{(t+1)}}+d^{(t)}\sign(\bar{w}^{{(t+1)}})|-d^{(t+1)},0\}\\
			&=\sign(\theta^{(t+1)})\max\{|\sign(\bar{w}^{{(t+1)}})(|\bar{w}^{{(t+1)}}|+d^{(t)})|-d^{(t+1)},0\}\\
			&=\sign(\theta^{(t+1)})\max\{|\bar{w}^{{(t+1)}}|+d^{(t)}-d^{(t+1)},0\}\\
			&=\sign(\bar{w}^{{(t+1)}})\max\{|\bar{w}^{{(t+1)}}|-(d^{(t+1)}-d^{(t)}),0\}\\
			&=\gS_{d^{(t+1)}-d^{(t)}}(w^{(t)}-\eta^{(t)}\nabla_{w}(\gL(w^{(t)}))).
		\end{split}
	\end{equation}
\end{proof}
\section{Original threshold scheduler in STDS}
\label{sec:stds}
In STDS, the authors propose the Sine scheduler $d^{(t)}=\frac{1}{2}(1+\sin(\pi(\frac{t}{T}-\frac{1}{2})))D=\frac{1}{2}(1-\cos(\frac{t\pi}{T}))D$. With the form of simplified threshold scheduler in Eq.~\ref{eq:simp}, by Theorem \ref{thm:threshold-scheduler}, the corresponding penalty $\mu^{(t)}$ has the form
\begin{equation}
\begin{split}
    \mu^{(t)}&=\frac{d^{(t+1)}-d^{(t)}}{\eta^{(t)}}\\
    &=D \cdot\frac{\cos(\frac{t\pi}{T})-\cos(\frac{(t+1)\pi}{T})}{\eta_{\max}(1+\cos\frac{t\pi}{T})}\\
    &=\frac{2D\sin\frac{\pi}{T}}{\eta_{\max}}\cdot\frac{\sin(\frac{t\pi}{T}+\frac{\pi}{2T})}{1+\cos\frac{t\pi}{T}}\\
    &\approx C\cdot\frac{\sin(\frac{t\pi}{T})}{1+\cos\frac{t\pi}{T}}\\
    &=C\cdot\tan(\frac{t\pi}{2T})
\end{split}
\end{equation}
It is a function of training progress $t/T$ with constant $C$. Investigate the function $\tan(x/2)$ on interval $(0,\pi)$, we have $\mu(x)$ is increasing from 0 to $+\infty$, which implies $\mu$ can be sufficiently large during training. The loss is thus insignificant compared to the regularization term in the last stage of training and leads to performance degradation with respect to S-LATS. 
\section{Discussion about training threshold}
\label{sec:train-threshold}
In the main text, we propose a framework explaining the style of growth in threshold as an everchanging optimization problem. However, we only cover manually designed threshold schedulers. We make a discussion here and show training threshold is not as easy as STR or GPO did. We will show GPO and STR share the same discussion of training threshold since GPO will degenerate to STR in a few training epochs. Moreover, we suggest not simply setting threshold trainable if one really wants to investigate optimization of threshold.
\subsection{\texorpdfstring{$L_2$}{L2} penalty dominates early training of STR.}
In the official codebase of STR, we notice the trainable sparse threshold is also together with weight decay ($L_2$ regularization $\lambda\|\vs\|^2_2$). $\lambda$ is of magnitude around $10^{-5}\sim10^{-4}$. The initial value $s_{\text{init}}$ is usually set to negative number with large magnitude around $-10^{4}\sim-10^{3}$ for CNN trials. It is easy to see the magnitude of $L_2$ regularized term is around $0.01\sim 1$ in the early stage of training.

Take CNN for instance. For given loss function $\gL$ and weight $\vw_l$ of the $l$-th layer, the gradient passed to threshold $s_l$ can be estimated as 
\begin{equation}
	\begin{split}
		\nabla_{s_l}\gL(\vw_l(s_l,\vtheta_l))&=\left\langle \nabla_{\vw_l}\gL(\vw_l),\left( \frac{\partial{\vw_l}}{\partial{s_l}}\right) ^{\top}\right\rangle \\
		&=\sum_{(w,\theta)\in(\vw_l,\vtheta_l)}\nabla_{w}\gL(\vw_l)\cdot\nabla_{s_l}(\gS_{\sigmoid(s_l)}(\theta))\\
		&=-\sigmoid(s_l)(1-\sigmoid(s_l))\sum_{\substack{(w,\theta)\in(\vw_l,\vtheta_l)\\w\neq 0}}\nabla_{w}\gL(\vw_l)\cdot\sign(\theta)
	\end{split}
\end{equation}
wherein each term of sum has magnitude less than $\frac{1}{4}\nabla_{w}\gL(\vw_l)$ since sigmoid function $\sigmoid(x)$ is bounded in $(0,1)$. Given that the stochastic gradient noise across parameters in $\vw_l$ admits to L\'evy distribution \citep{xie2021a}, some negative and positive terms will balance each other, which makes us wonder whether the regularization term dominates the training of threshold. Therefore, we compare the gradient to $L_2$ penalty by tracing the magnitude ratio of $\nabla_{s_l}\gL$ to $\lambda|s_l|$ during training of ResNet-50 on ImageNet, which is shown in Fig.~\ref{fig:grad-l2-ratio}.

\begin{figure}[ht]
	\centering
	\centerline{\includegraphics[width=\textwidth]{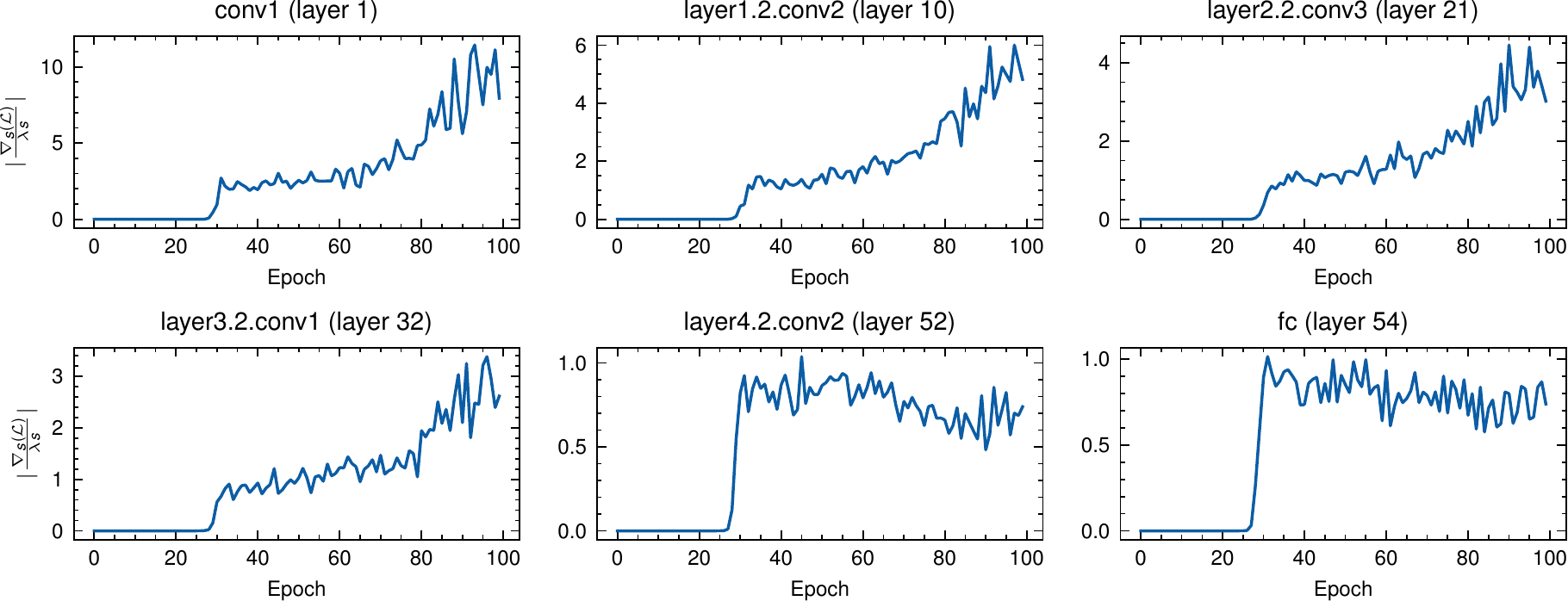}}
	\caption{Magnitude ratios of gradient to $L_2$-regularization $\left|\frac{\nabla_{s_l}\gL}{\lambda s_l}\right|$ in different layers when training using STR at 90.23\% overall sparsity. Data within an epoch are averaged using geometric mean.}
	\label{fig:grad-l2-ratio}
\end{figure}

It is evident that the $L_2$ penalty plays the leading role in the early stage rather than the gradient. We even observe for several final layers, the penalty always dominates the training of threshold. The update of $s$ can thereby be rewritten to $s^{(t+1)}\approx s^{(t)}(1-\eta^{(t)}\lambda)$, where $\eta^{(t)}$ is the learning rate at the $t$-th iteration. Regardless of gradient, at the beginning of training, STR can be viewed as a special case of threshold scheduler as follows
\begin{equation}
	d^{(t)}=\sigmoid\left( s_{\text{init}}\cdot \prod_{i=0}^{t-1}(1-\eta^{(i)}\lambda)\right),t=1,2,\dots 
\end{equation}
By Theorem \ref{thm:threshold-scheduler}, the corresponding penalty can be derived by 
\begin{equation}
\begin{split}
    \mu^{(t)}&=\frac{d^{(t+1)}-d^{(t)}}{\eta^{(t)}}\\
    &=\frac{\sigma(s^{(t+1)})-\sigma(s^{(t)})}{\eta^{(t)}}\\
    &=\frac{\sigma'(\bar{s})(s^{(t+1)}-s^{(t)})}{\eta^{(t)}}\\
    &=-\frac{\sigma'(\bar{s})s^{(t)}\eta^{(t)}\lambda}{\eta^{(t)}}\\
    &=-\sigma'(\bar{s})s^{(t)}\lambda
\end{split}
\end{equation}
where $\bar{s}$ lies between $s^{(t+1)}$ and $s^{(t)}$. The existence of $\bar{s}$ is shown by Lagrange's Mean Value Theorem. It is rather difficult to analytically give the explicit expression of $\mu^{(t)}$ since it relies on the behavior of $s$, which has a dynamic decay rate $1-\eta^{(t)}\lambda$. 

Loosely speaking, we adopt the approximation $\bar{s}\approx s^{(t)}$. We still cannot show the explicit form of $\mu^{(t)}$, but now we can investigate the trend of penalty by analyzing function $\mu(x)=-\sigma'(x)x$, which is increasing in $(-\infty,\gamma)$. Here $\gamma\approx-1.5434$ is the unique negative root of $e^{-x}(x+1)-x+1=0$. In the early stage, $s$ is a negative number with a much larger magnitude than $\gamma$. Based on the above, $\mu$ is increasing in the early stage.

\subsection{GPO: fall back to STR shortly}
It concludes STR is majorly influenced by $L_2$ decay on $s$. We will see that GPO has a similar behavior on $k$ and GPO will degenerate to STR shortly after the training begins. 

In GPO, the authors introduces another trainable parameter $\beta$ in the reparameterization $\vw=(1-k)\gS_{d}(\vtheta)+k\vtheta$ with $k=10^{-6}|\beta|$.
GPO starts from the identity mapping $\vw=\vtheta$ by initializing $\beta=10^6$. The gradient passed to $\beta_l$ in the $l$-th layer is given by
\begin{equation}
	\begin{split}
		\nabla_{\beta_l}\gL(\vw_l(s_l,\beta_l,\vtheta_l))&=\left\langle \nabla_{\vw_l}\gL(\vw_l),\left( \frac{\partial{\vw_l}}{\partial{\beta_l}}\right) ^{\top}\right\rangle \\
		&=10^{-6}\sign(\beta_l)\cdot\sum_{(w,\theta)\in(\vw_l,\vtheta_l)}\nabla_{w}\gL(\vw_l)\cdot(\theta-\gS_{d}(\theta))\\ 
	\end{split}
\end{equation}
Notice that $\theta-\gS_{d}(\theta)=\begin{cases}
	\theta, & |\theta|<d\\
	d\sign(\theta), & |\theta|\geq d
\end{cases}$ has magnitude not greater than $d$, which derives that the gradient $\nabla_{w}\gL(\vw_l)$ are added with coefficient whose magnitude is below $10^{-6}d$. 

From the codebase of GPO, we confirm the weight decay on $\beta$ is $10^{-4}$. Since the initial $\beta$ is $10^6$, the $L_2$ regularized term is of magnitude around $10^{-4}\times 10^6=100$ at the beginning of training. Recall that $d=\sigmoid(s)<1$, it is obvious that the gradient pass to $\beta$ has a much smaller magnitude than the $L_2$ regularization. For this reason, the gradient can be ignored for $k$. Furthermore, $k$ will shrink exponentially to almost zero and the mapping in GPO will fall back to STR within a few epochs, which can be seen by $\vw=(1-k)\gS_{d}(\vtheta)+k\vtheta\approx\gS_{d}(\vtheta)$.

\subsection{Focus on threshold scheduler instead of the final threshold.}
Conventional wisdom suggests that when a parameter is set trainable, it will be optimized automatically. This idea should only work for those directly determining the performance, \textit{e.g.}, weights in a dense network. For pruning, authors of STDS find the differences in positions of performance versus sparsity curve should be ascribed to the evolving patterns of the threshold. To verify this, we replace the threshold training mechanism in STR and default scheduler in STDS by several schedulers with the same final threshold $D=10^{-3}$ ($D=0.5$ for STDS) shown below
\begin{itemize}
	\item Sine: $d^{(t)}=\frac{1}{2}(1+\sin(\pi(\frac{t}{T}-\frac{1}{2})))D$
	\item Linear: $d^{(t)}=\frac{t}{T}D$
	\item Log2: $d^{(t)}=\log_2(\frac{t}{T}+1)D$
\end{itemize}
We conduct the training on ANN ResNet-50 for both methods, the results are shown in Tab.~\ref{tab:str-scheduler}.
\begin{table}[ht]
	\caption{Comparison of threshold schedulers when applied to ResNet-50 on ImageNet.}
	\label{tab:str-scheduler}
	\centering
	\begin{tabular}{ccccc}
		\toprule
		\multirow{2}*{Scheduler} & \multicolumn{2}{c}{\textbf{STR} ($D=10^{-3}$)} & \multicolumn{2}{c}{\textbf{STDS} ($D=0.5$)} \\
		\cmidrule(lr){2-3}\cmidrule(lr){4-5}
		& Sparsity (\%) & Top-1 Acc. (\%) & Sparsity (\%) & Top-1 Acc. (\%) \\
		\midrule
		Sine & 94.14 & 71.51 & 95.72 & 72.42 \\
		Linear & 95.41 & 69.85 & 98.46 & 59.79 \\
		Log2 & 95.95 & 68.55 & 98.05 & 64.74 \\
		\bottomrule
	\end{tabular}
\end{table} 
It is obvious that even though the final thresholds are set equally, the accuracy and overall sparsity vary with the scheduler. In fact, simply setting threshold trainable indicates one only cares about whether the final threshold is optimal, which turns out to be a tangential issue in soft threshold pruning.

The above results suggest the correct manner to manipulate threshold is pursuing a well-performed scheduler. Even if one studies the learning of threshold, it should be concentrated on the optimization of threshold scheduler, which may require tools in discrete-time optimal control. Due to the complex coupling between constantly changing threshold and final performance, discussion based on discrete-time optimal control is beyond the scope of this work. 

\section{Detailed derivation for LATS and S-LATS}
\label{sec:detail-derivation}
In this part, we provide detailed derivations for LATS and S-LATS, which covers Eq.~\ref{eq:lats} and Eq.~\ref{eq:simp}. 

\subsection{LATS for cosine annealing scheduler}
In most of the deep learning applications, the training process includes the schedule of the learning rate, which is also known as \textit{learning rate scheduler}. Generally, the learning rate is updated at the end of an epoch. Assuming there are $N$ training epochs in total, each of which includes $B$ training mini-batches. The learning rate scheduler is defined as $\eta^{(n,b)}=h(n/N)$, which evaluates the learning rate at the $b$-th mini-batch in the $n$-th epoch. Here $n=0,1,\ldots,N-1$ denotes the current id of the training epoch, and $b=1,2,\ldots, B$ denotes the batch id in $n$-th epoch. We denote $h:[0,1]\to\mathbb{R}^+$ as the scheduler function for the learning rate. For cosine annealing scheduler with $\eta_{\min}=0$, we have
\begin{equation} \label{eq:sts}
	h(x)=\frac{\eta_{\max}}{2}(1+\cos(\pi x))
\end{equation}
In Corollary \ref{cor:lats}, the threshold scheduler for LATS is obtained by $d^{(t)}=d^{(0)}+\mu\sum_{i=0}^{t-1}\eta^{(i)}$, where $d^{(t)}$ is the threshold after $t$ mini-batches from the beginning. Under the learning rate scheduler described in Eq.~\ref{eq:sts}, the threshold $d^{(n,b)}$ is shown by accumulating all previous learning rates, which gives
\begin{equation}\label{eq:lats-detail}
	d^{(n,b)}=d^{(0,0)}+\mu\left[b\cdot h(n/N)+B\sum_{i=0}^{n-1}h(i/N)\right]
\end{equation}
For cosine annealing learning rate scheduler, we have
\begin{equation}\label{eq:lats-detail-cos}
	\begin{split}
		d^{(n,b)}&=d^{(0,0)}+\frac{\mu\eta_{\max}}{2}\left[b(1+\cos\frac{n\pi}{N})+B\sum_{i=0}^{n-1}(1+\cos\frac{i\pi}{N})\right]\\
		&=d^{(0,0)}+\frac{\mu\eta_{\max}}{2}\left[b(1+\cos\frac{n\pi}{N})+Bn+B\sum_{i=0}^{n-1}\cos\frac{i\pi}{N}\right]
	\end{split}
\end{equation}
To evaluate the sum of $\cos(i\pi/N)$, we give the following results
\begin{equation}
	\begin{split}
		\sum_{i=0}^{n-1}\cos\frac{i\pi}{N}&=\frac{1}{\sin\frac{\pi}{2N}}\sum_{i=0}^{n-1}\left( \cos\frac{i\pi}{N}\sin\frac{\pi}{2N}\right) \\
		&=\frac{1}{\sin\frac{\pi}{2N}}\sum_{i=0}^{n-1}\frac{1}{2}\left( \sin\left( \frac{i}{N}+\frac{1}{2N}\right)\pi -\sin\left( \frac{i}{N}-\frac{1}{2N}\right)\pi\right)\\
		&=\frac{1}{2\sin\frac{\pi}{2N}}\left(\sum_{i=0}^{n-1}\sin\left( \frac{2i+1}{2N}\pi\right) -\sum_{i=0}^{n-1}\sin\left( \frac{2i-1}{2N}\pi\right)\right)\\
		&=\frac{1}{2\sin\frac{\pi}{2N}}\left(\sum_{i=1}^{n}\sin\left( \frac{2i-1}{2N}\pi\right) -\sum_{i=0}^{n-1}\sin\left( \frac{2i-1}{2N}\pi\right)\right)\\
		&=\frac{1}{2\sin\frac{\pi}{2N}}\left(\sin\left( \frac{2n-1}{2N}\pi\right)+\sin \frac{\pi}{2N}\right)\\
		&=\frac{1}{2}+\frac{\sin\left( \frac{2n-1}{2N}\pi\right)}{2\sin\frac{\pi}{2N}}.
	\end{split}
\end{equation}
Recall Eq.~\ref{eq:lats-detail-cos}, we have
\begin{equation}
	\begin{split}
		d^{(n,b)}&=d^{(0,0)}+\frac{\mu\eta_{\max}}{2}\left[b(1+\cos\frac{n\pi}{N})+Bn+B\left( \frac{1}{2}+\frac{\sin\left( \frac{2n-1}{2N}\pi\right)}{2\sin\frac{\pi}{2N}}\right) \right]\\
		&=d^{(0,0)}+\mu\eta_{\max}\left[\frac{b}{2}(1+\cos\frac{n\pi}{N})+\frac{B}{4}\left( 2n+1+\frac{\sin\left( \frac{2n-1}{2N}\pi\right)}{\sin\frac{\pi}{2N}}\right)\right].
	\end{split}
\end{equation}
When $d^{(0,0)}=0$, this gives LATS in the form of Eq.~\ref{eq:lats}.
\subsection{The detailed motivation of S-LATS}
The implementation of LATS is rather complicated and requires meticulous coding. Worse still, for some learning rate schedulers, \textit{e.g.}, polynomial decay scheduler \citep{liu15parsenet,7913730} 
\begin{equation}
\eta^{(t)}=\eta_{\max}\left( 1-\frac{t}{T}\right) ^{\kappa},
\end{equation}
where $\kappa>0$ is a constant ($\kappa=0.9$ in aforementioned studies), the form of LATS cannot be reduced like cosine annealing scheduler in most cases. To see this, we write the corresponding LATS as 
\begin{equation}\label{eq:poly}
d^{(t)}=d^{(0)}+\mu\eta_{\max}\sum\limits_{i=0}^{t-1}\left( 1-\frac{i}{T}\right) ^{\kappa}.
\end{equation}
Note that $1-\frac{i}{T}$ makes up an arithmetic progression, and the threshold is the sum of their powers. Simplifying the sums of powers of arithmetic progression requires the so-called \textit{Bernoulli number} \citep{Jacobi1834,knuth1993johann} when $\kappa$ is integer. For a general $\kappa>0$, the expression of Eq.~\ref{eq:poly} includes the \textit{generalized harmonic numbers} $H_{n}^{(-p)}:=\sum_{k=1}^{n}k^p$, which is further based on the \textit{Hurwitz zeta function} \citep{COFFEY2008297}. In such case, we cannot analytically compute LATS, which forces us to do the summation in Eq.~\ref{eq:poly}. It thus brings about the accumulative error. In brief, we cannot expect each learning rate scheduler corresponds to an analytical and simple form of LATS.

To handle this, we turn to an approximation rather than precisely evaluating $d^{(t)}$.
Returning to Eq.~\ref{eq:lats-detail}, with $d^{(0,0)}$ ignored, we split the right hand side into two terms $B\sum_{i=0}^{n-1}h(i/N)$ and $b\cdot h(n/N)$. 

The first term could be viewed as left Riemann sum in two steps 1) interpolate points $0,1/N,2/N,\dots,(n-1)/N,n/N$ with constant spacing $1/N$, the width of the rectangles 2) evaluate the sum of rectangle areas with height $h(i/N)$. It leads to the approximation of the Riemann integral
\begin{equation}\label{eq:integral1}
	B\sum_{i=0}^{n-1}h(i/N) =BN\sum_{i=0}^{n-1}\frac{1}{N}h(i/N)\approx BN\int_{0}^{n/N}h(x)\mathrm{d}x.
\end{equation}

The second term match $b/B$ part of a residual tiny rectangle
\begin{equation}\label{eq:integral2}
	b\cdot h(n/N)=BN\cdot\frac{b}{B}\cdot\frac{1}{N}h(n/N)
\end{equation}
To sum up, Eq.~\ref{eq:integral1} and Eq.~\ref{eq:integral2} together make a numerical approximation of integral $BN\int_{0}^{n/N+b/BN}h(x)\mathrm{d}x$, which is shown schematically in Fig.~\ref{fig:slats-integral}. Note that we could write training progress as $t/T$ now, where $t=Bn+b$ is the current iteration id and $T=BN$ is the total training iterations.

\begin{figure}[ht]
	\centering
	\centerline{\includegraphics[width=\textwidth]{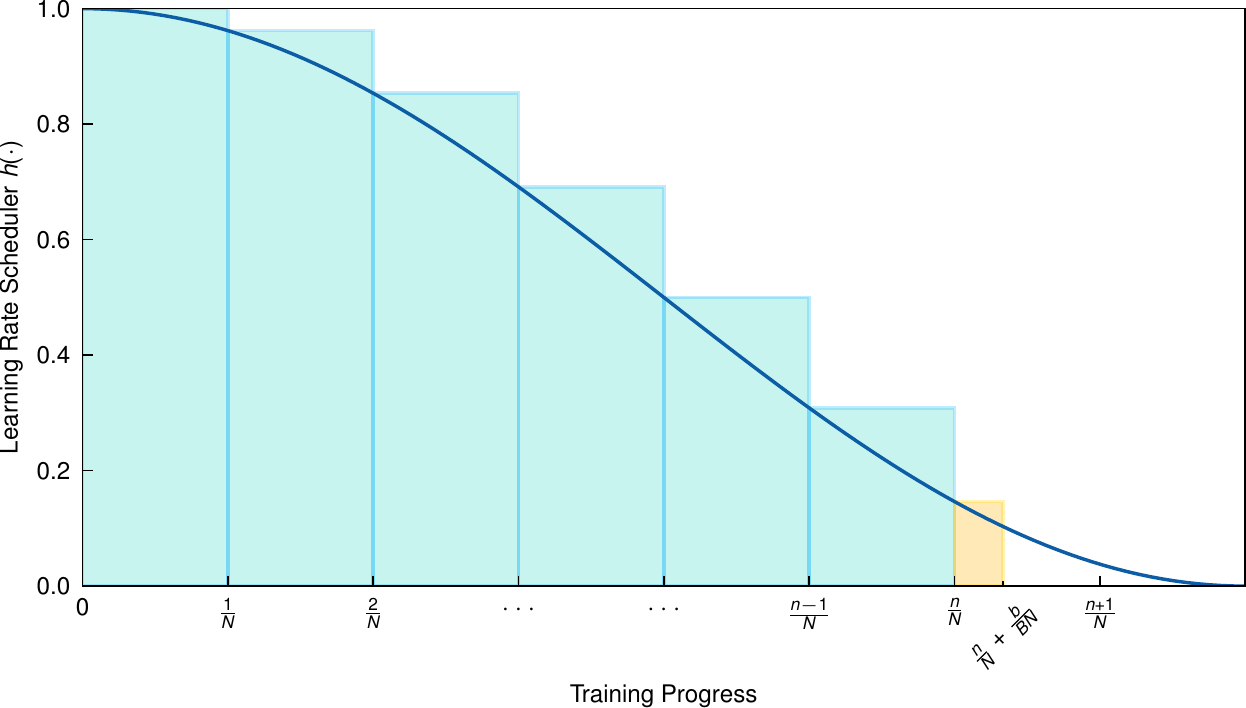}}
	\caption{Explanation on numerical approximation of integral $\int_{0}^{n/N+b/BN}h(x)\mathrm{d}x$. Cosine annealing learning rate scheduler is exemplified in a darker blue curve. The area of $n$ cyan rectangles corresponds to Eq.~\ref{eq:integral1}. The area of the tiny yellow rectangle is Eq.~\ref{eq:integral2}.}
	\label{fig:slats-integral}
\end{figure}

So far, we successfully replace the summation with integration. However, the current final threshold is $BN\int_{0}^{1}h(x)\mathrm{d}x$, which is different from the real one $d^{(N-1,B)}$. To keep the final threshold $d^{(T)}=d^{(N-1,B)}$, we normalize the integral as follows
\begin{equation}
	d^{(t)}=d^{(N-1,B)}\cdot\frac{\int_{0}^{\frac{t}{T}} h(x)\mathrm{d}x}{\int_{0}^{1} h(x)\mathrm{d}x}.
\end{equation}

For most learning rate scheduler functions, integration is much easier than summation. S-LATS enables us to apply our pruning method on wider varieties of deep learning applications.

\section{Pruning experiments on MobileNet-V1 using S-LATS}
\label{sec:mbnet}
Besides the ResNet-like structures mentioned in the main text, we also conduct experiments on MobileNet-V1 \citep{howard2017mobilenets} to show the power of our proposed methods S-LATS on the lightweight network. To make a fair comparison, we choose those SOTA studies using the standard training setting, \textit{i.e.}, batch size of 256 and 100 training epochs. They include STR \citep{pmlr-v119-kusupati20a}, gradual pruning in WoodFisher \citep{NEURIPS2020_d1ff1ec8}, and a modern implementation \citep{gale2019state} of Gradual Magnitude Pruning (GMP) \citep{zhu2017prune}. Note that we do not compare S-LATS to OptG, since OptG adopts $1.8\times$ (180) training epochs, or the comparison would be unfair. 

The results are shown in Tab.~\ref{tab:mbnet}. Apparently, our proposed method towers over previous sparsify-during-training work. We elaborate on the sparsity budgets and the corresponding final thresholds in Tab.~\ref{tab:budget-slats-mbnet} of Appendix \ref{sec:budget}. The hyperparameters for training MobileNet-V1 are stated in Tab.~\ref{tab:ann-mbnet-setting} of Appendix \ref{sec:hyper}.
\begin{table}[ht]
	\caption{Performance comparison of MobileNet-V1 on ImageNet using standard training setting (256 batch size, 100 epochs). The results of GMP are gleaned from the manuscripts of STR and OptG. The accuracy of our method is averaged over three trials.}
	\label{tab:mbnet}
	\centering
	\begin{tabular}{ccc}
		\toprule
		Method & Top-1 Acc. (\%) & Sparsity (\%) \\
		\midrule
		Dense & 71.95 & 0 \\
		\midrule
		GMP & 67.70 & 74.11 \\
		STR & 68.35 & 75.28 \\
		STR & 66.52 & 79.07 \\
		WoodFisher & 70.09 & 75.28 \\
		\textbf{S-LATS} & \textbf{68.25\textsubscript{\textpm 0.19}} & \textbf{81.84} \\
		\midrule
		GMP & 61.80 & 89.03 \\
		STR & 64.83 & 85.80 \\
		STR & 62.10 & 89.01 \\
		STR & 61.51 & 89.62 \\
		WoodFisher & 63.87 & 89.00 \\
		\textbf{S-LATS} & \textbf{66.73\textsubscript{\textpm 0.08}} & \textbf{85.87} \\
		\textbf{S-LATS} & \textbf{65.63\textsubscript{\textpm 0.20}} & \textbf{88.22} \\
		\textbf{S-LATS} & \textbf{64.93\textsubscript{\textpm 0.21}} & \textbf{89.08} \\
		\bottomrule
	\end{tabular}
\end{table} 
\section{Ablation study of threshold scheduler on ResNet-50}
\label{sec:ablation}
\subsection{Remove \texorpdfstring{$L_2$}{L2} decay for fair comparison}
To evaluate the performance gains brought by the threshold scheduler alone, the \textit{weight decay} or $L_2$ \textit{penalty} must be removed. To explain this, recall line~\ref{alg:stp:line:update-theta} in Algorithm~\ref{alg:stp}, we know the update rule of hidden weight $\vtheta$ is affected by both gradient and $L_2$ penalty $\lambda\|\vtheta\|_2$. However, the analysis in Lemma \ref{lem:update} is based on vanilla SGD without weight decay. To account for this inconsistency, let's first investigate the influence of weight decay on the equivalent optimization problem.

In the presence of weight decay, the update rule described by Eq.~\ref{eq:update1-appendix} has an additional penalty term as follows
\begin{equation}
	\vtheta^{(t+1)}\leftarrow \vtheta^{(t)}-\eta^{(t)}\nabla_{\vw}(\gL(\vw^{(t)}))\odot \nabla_{\vtheta}(\vw(\vtheta^{(t)},d^{(t)}))-\eta^{(t)}\lambda\vtheta^{(t)}
\end{equation}
Following derivation in Eq.~\ref{eq:update-nonzero}, we have
\begin{equation}\label{eq:update-nonzero-nol2}
	\theta^{(t+1)}=w^{(t)}+d^{(t)}\sign(w^{(t)})-\eta^{(t)}\nabla_{w}(\gL(w^{(t)}))-\eta^{(t)}\lambda\theta^{(t)}
\end{equation}
Similarly, we denote $\bar{w}^{{(t+1)}}:=w^{(t)}-\eta^{(t)}\nabla_{w}(\gL(w^{(t)}))$ to be the target point of vanilla SGD without regularization. With Eq.~\ref{eq:update-nonzero-nol2}, we have
\begin{equation}
	\begin{split} \bar{w}^{{(t+1)}}&=\theta^{(t+1)}-d^{(t)}\sign(w^{(t)})\eta^{(t)}-\lambda\theta^{(t)}\\
		&=\sign(\theta^{(t+1)})|\theta^{(t+1)}|-\sign(\theta^{(t)})d^{(t)}-\lambda\sign(\theta^{(t)})|\theta^{(t)}|\\
		&=\sign(\theta^{(t+1)})|\theta^{(t+1)}|-\sign(\theta^{(t+1)})d^{(t)}-\lambda\sign(\theta^{(t)})|\theta^{(t)}|\\
		&=\sign(\theta^{(t+1)})(|\theta^{(t+1)}|-d^{(t)}-\lambda|\theta^{(t)}|),
	\end{split}
\end{equation}
If $\bar{w}^{{(t+1)}}$ still satisfies the local relation that  $\sign(w^{(t+1)})=\sign(\theta^{(t+1)})=\sign(\theta^{(t)})=\sign(w^{(t)})=\sign(\bar{w}^{{(t+1)}})$, mimicking Eq.~\ref{eq:ista}, we have 
\begin{equation}\label{eq:ista-nol2}
	w^{(t+1)}=\gS_{d^{(t+1)}-d^{(t)}+\lambda|\theta^{(t)}|}(w^{(t)}-\eta^{(t)}\nabla_{w}(\gL(w^{(t)}))).
\end{equation}
Apparently, the $L_2$ penalty of $\vtheta$ lies into the equivalent $L_1$ penalty term of $\vw$ in the ISTA rule, making the analysis of the corresponding threshold scheduler intractable. Accordingly, we decide to remove weight decay, \textit{i.e.}, and set $\lambda$ to zero in the ablation study to prevent an unpredictable threshold scheduler.
\subsection{Sine scheduler in STDS vs S-LATS}
After removing weight decay, we rerun the original STDS and our methods under several sparsity levels (through changing $D$) while keeping the other hyperparameters and the batch size of 256. As illustrated in Fig.~\ref{fig:resnet50-acc-sparsity-imagenet-nol2}, the results on ResNet-50 show our method clearly surpasses the original STDS on the ImageNet dataset. A theoretical analysis is enclosed in Appendix \ref{sec:stds}.
\begin{figure}[ht]
	\centering
	\centerline{\includegraphics[width=0.5\textwidth]{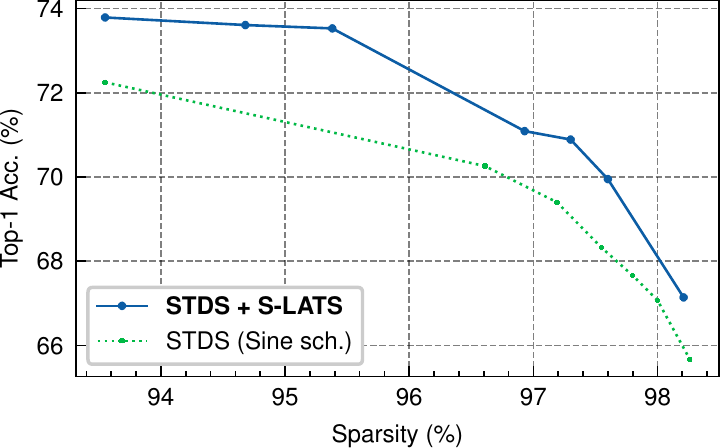}}
	\caption{Performance comparison of original STDS and our method.}
	\label{fig:resnet50-acc-sparsity-imagenet-nol2}
\end{figure}

\section{Sparsity vs firing rate in SNNs}

\subsection{An overview of SNNs}
Spiking neural networks (SNNs) are honored as the third generation of neural network models \citep{MAASS19971659}, derived from biological neural network modeling. SNNs are composed of spiking neurons, which release spikes in binary form, and connections between neurons. The model of spiking neurons is a dynamical system described by one or more ordinary differential equations (ODE) and a firing threshold. The dynamical system is also called ``subthreshold dynamics'' in the context of computational neuroscience. A spike is generated and passed to all postsynaptic spiking neurons when the variable representing \textit{membrane potential} exceeds the firing threshold. Today, the most commonly used neuron model is the \textit{Leaky Integrate-and-Fire} (LIF) model. Specifically, LIF has the subthreshold dynamic as follows 
\begin{equation}
	\label{eq:neuron}
	\tau_m \frac{\mathrm{d}u(t)}{\mathrm{d}t}=-(u(t)-u_{\text{rest}})+\sum Iw,
\end{equation}
where $u(t)$ is the membrane potential at time $t$, $u_{\text{rest}}$ is the resting potential, $\tau_m$ is the membrane constant, $I$ and $w$ denote input spikes and input weights respectively. The firing behavior of LIF neurons is depicted as an instantaneous jump of membrane potential shown below
\begin{equation}
	\label{eq:neuron2}
	\lim\limits_{\Delta t\to 0^+}u(t^f+\Delta t)=u_{\text{rest}},\text{ if } u(t^f)\geq u_{\text{th}},
\end{equation}
where $u_{\text{th}},t^f$ are the firing threshold and firing time respectively.

The ODE in Eq.\ref{eq:neuron} can be discretized via the Euler method and transformed into an RNN-like iterative computing manner as follows
\begin{equation}\label{eq:neuron-discrete}
\begin{split}
	u[t^-]&= u[t-1]+\frac{1}{\tau_m}\left( -(u[t-1]-u_{\text{rest}})+\sum\nolimits_i w_iI_{i}[t]\right),\\
	s[t] &= H(u[t^-]-u_{\text{th}}),\\
	u[t]&= s[t]u_{\text{rest}}+(1-s[t])u[t^-].\\
\end{split}
\end{equation}
where $u[t^-],u[t]$ are the membrane potential before and after firing at timestep $t$ respectively, $H(\cdot)$ is the Heaviside step function modeling jump behavior when a spike is triggered. However, training techniques in RNN, such as backpropagation through time (BPTT) \citep{58337} cannot be directly applied to SNNs for the spiking behavior described by Heaviside is non-differentiable. Thanks to the \textit{surrogate gradient} method proposed in \citet{10.3389/fnins.2018.00331,8891809}, researchers can now incorporate BPTT into training of SNNs by switching to a ``differentiable mode'' of the Heaviside step function when computing gradient. It refers to replacing Heaviside step with a differentiable surrogate function. Surrogate gradient resembles the straight-through estimator \citep{bengio2013estimating} closely in both computing style and ideology.
\subsection{Reducing SNNs cost on neuromorphic hardware}
SNNs are considered energy efficient when deployed on a series of dedicated hardware, also known as \textit{neuromorphic hardware} or \textit{event-driven hardware}. On these chips, the computation is triggered only when there are incoming spikes and weights are nonzero \citep{doi:10.1126/science.1254642}. For this reason, there are three mainstream methods for alleviating the energy cost of a given SNN on neuromorphic chips including 1) unstructured pruning of weights, 2) reducing the number of spikes, and 3) searching for efficient SNN structures. The NAS studies on SNNs \citep{pmlr-v162-na22a,kim2022neural} are not based on existing SNN structure, so we omit the discussion of NAS methods. 

Many recent studies have made ample signs of progress on pruning and reducing spike counts. They confirm there is a weak correlation between the number of spikes and weight sparsity \citep{9597482,pmlr-v162-chen22ac,kim2022lottery}. We also evaluate the spike counts of pruned SNNs. Compared to the number of spikes, a more frequently used metric is \textit{average firing rate}, which is obtained by averaging the number of spikes across timesteps and spiking neurons. We collect the average firing rate of each trial during inference using pruned SEW ResNet-18. We further provide a plot of the average firing rate against the sparsity, which is shown in Fig.~\ref{fig:firing-rate-trend}.
\begin{figure}[ht]
	\centering
	\centerline{\includegraphics[width=0.5\textwidth]{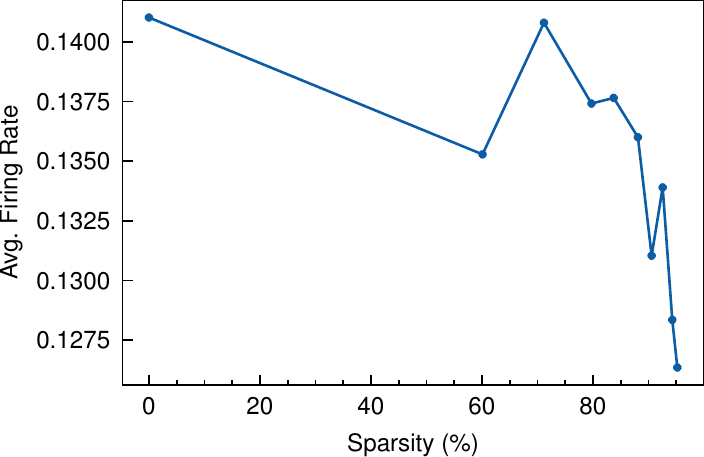}}
	\caption{The trend of average firing rate against sparsity.}
	\label{fig:firing-rate-trend}
\end{figure}
The weak relationship between the number of spikes and weight sparsity is manifested in the slightly decreased average firing rate. Despite a downward trend in the average firing rate, the relative magnitude of the decline is trifling. It is consistent with previous observations and suggests pruning is an inefficient means of reducing the number of spikes in SNNs.

In conclusion, pruning is an efficient way to induce weight sparsity and lower cost. However, we should not expect the suppression of firing rates as a bonus.
\section{Training hyperparameters}
\label{sec:hyper}
We make the detailed setting in our experiments clear in Tab.~\ref{tab:ann-setting}, Tab.~\ref{tab:ann-mbnet-setting} and Tab.~\ref{tab:snn-setting}.
\begin{table}[ht]
	\caption{ANN ResNet-50 hyperparameters.}
	\label{tab:ann-setting}
	\centering
	\begin{tabular}{lcc}
		\toprule
		Description & Notation & Value \\
		\midrule
		\# Epoch & - & 100 \\
		Optimizer & - & Momentum SGD ($\mathrm{momentum}=0.875$) \\
		Overall batch size & - & \begin{tabular}{@{}ll@{}}
                   256 & 1024\\
                 \end{tabular}\\
		Max. learning rate & $\eta_{\max}$ & \begin{tabular}{@{}ll@{}}
                   0.256 & 0.512\\
                 \end{tabular} \\
		Learning rate scheduler & - & Cosine annealing \\
		Warmup epochs & - & 5 \\
		Label smoothing & - & 0.1 \\
		Weight decay & $\lambda$ & 3.05e-5 (0 for ablation study in Appendix \ref{sec:ablation}) \\
		Prune BN layers? & - & No \\
		Prune first and last layers? & - & Yes \\
		\bottomrule
	\end{tabular}
\end{table}
\begin{table}[ht]
	\caption{ANN MobileNet-V1 hyperparameters.}
	\label{tab:ann-mbnet-setting}
	\centering
	\begin{tabular}{lcc}
		\toprule
		Description & Notation & Value \\
		\midrule
		\# Epoch & - & 100 \\
		Optimizer & - & Momentum SGD ($\mathrm{momentum}=0.875$) \\
		Overall batch size & - & 256 \\
		Max. learning rate & $\eta_{\max}$ & 0.256 \\
		Learning rate scheduler & - & Cosine annealing \\
		Warmup epochs & - & 5 \\
		Label smoothing & - & 0.1 \\
		Weight decay & $\lambda$ & 3.05e-5 \\
		Prune BN layers? & - & No \\
		Prune first and last layers? & - & Yes \\
		\bottomrule
	\end{tabular}
\end{table} 
\begin{table}[ht]
	\caption{SNN SEW ResNet-18 hyperparameters.}
	\label{tab:snn-setting}
	\centering
	\begin{tabular}{lcc}
		\toprule
		Description & Notation & Value \\
		\midrule
		\# Epoch & - & 320 \\
		Optimizer & - & Momentum SGD ($\mathrm{momentum}=0.9$) \\
		Overall batch size & - & 256 \\
		Max. learning rate & $\eta_{\max}$ & 0.1 \\
		Learning rate scheduler & - & Cosine annealing \\
		Weight decay & $\lambda$ & 0 \\
		Prune BN layers? & - & No \\
		Prune first and last layers? & - & Yes \\
		Simulation timesteps & - & 4 \\
		SEW function & - & ADD \\
		\bottomrule
	\end{tabular}
\end{table}

\section{Sparsity budgets}
\label{sec:budget}

\begin{table}[hp]
\caption{Sparsity budgets of ResNet-50 using STDS + S-LATS on ImageNet (256 batch size).}
\label{tab:budget-slats-ann}
\centering
\resizebox{\textwidth}{!}{
\begin{tabular}{lcccccccccccccccccc}
\toprule
Final threshold $D$   & 0.1   & 0.2   & 0.3   & 0.4   & 0.45  & 0.5   & 0.6   & 0.7   & 0.8   & 1.0   & 1.4   & 1.5   & 2.0   & 3.0   & 3.5   & 4.0   & 5.0   & 6.0   \\
\midrule
Top-1 Acc. (\%)       & 76.75 & 75.52 & 74.50 & 73.75 & 73.18 & 73.03 & 72.04 & 71.47 & 70.81 & 69.64 & 67.47 & 67.02 & 64.20 & 60.35 & 58.88 & 56.79 & 53.64 & 50.41 \\
\midrule
Layer(s)              & \multicolumn{18}{c}{Sparsity (\%)} \\
\midrule
Overall               & 79.95 & 89.57 & 92.99 & 94.60 & 95.12 & 95.53 & 96.13 & 96.58 & 96.94 & 97.43 & 98.01 & 98.11 & 98.48 & 98.89 & 99.02 & 99.13 & 99.28 & 99.39 \\
\midrule
conv1                 & 34.18 & 49.83 & 56.73 & 59.57 & 64.35 & 67.90 & 69.79 & 70.89 & 75.54 & 74.31 & 78.36 & 78.71 & 81.92 & 83.94 & 85.58 & 86.20 & 88.01 & 89.47 \\
layer1.0.conv1        & 43.04 & 58.79 & 66.58 & 68.82 & 72.22 & 74.90 & 75.44 & 76.03 & 79.30 & 82.25 & 84.69 & 83.84 & 88.06 & 91.33 & 89.72 & 91.77 & 93.53 & 94.26 \\
layer1.0.conv2        & 73.59 & 82.82 & 87.36 & 89.43 & 90.82 & 91.38 & 92.01 & 93.18 & 93.06 & 94.96 & 95.97 & 95.87 & 97.08 & 98.22 & 97.96 & 98.44 & 98.51 & 99.00 \\
layer1.0.conv3        & 67.26 & 77.89 & 84.59 & 86.93 & 88.71 & 89.52 & 91.52 & 92.95 & 93.19 & 95.28 & 95.83 & 95.84 & 97.19 & 98.13 & 98.12 & 98.55 & 98.49 & 99.16 \\
layer1.0.downsample.0 & 58.61 & 73.39 & 78.28 & 82.06 & 83.91 & 85.33 & 85.86 & 87.01 & 89.11 & 90.31 & 91.83 & 92.61 & 93.19 & 94.56 & 95.14 & 95.73 & 96.19 & 96.62 \\
layer1.1.conv1        & 69.42 & 78.74 & 83.92 & 86.19 & 87.51 & 88.46 & 90.94 & 91.72 & 93.46 & 93.51 & 94.32 & 94.32 & 96.12 & 97.20 & 96.99 & 98.46 & 98.84 & 99.05 \\
layer1.1.conv2        & 73.06 & 84.09 & 88.31 & 89.18 & 90.32 & 91.81 & 93.83 & 94.73 & 96.02 & 95.62 & 95.72 & 95.36 & 97.18 & 97.96 & 98.02 & 99.20 & 99.64 & 99.41 \\
layer1.1.conv3        & 69.48 & 79.34 & 86.22 & 85.91 & 88.12 & 92.01 & 92.58 & 92.60 & 95.00 & 94.74 & 95.12 & 95.20 & 97.07 & 98.32 & 98.66 & 98.96 & 99.72 & 99.58 \\
layer1.2.conv1        & 64.66 & 75.52 & 82.27 & 86.07 & 87.00 & 87.75 & 89.77 & 88.64 & 91.03 & 92.55 & 94.18 & 95.09 & 96.27 & 97.03 & 97.97 & 97.82 & 98.78 & 98.91 \\
layer1.2.conv2        & 63.14 & 76.05 & 84.24 & 86.97 & 88.20 & 89.35 & 90.86 & 90.06 & 91.91 & 93.02 & 94.81 & 94.34 & 96.93 & 97.15 & 98.01 & 97.94 & 98.56 & 99.25 \\
layer1.2.conv3        & 68.90 & 80.36 & 86.72 & 87.87 & 89.11 & 90.09 & 91.77 & 92.69 & 92.73 & 94.42 & 96.64 & 95.80 & 97.64 & 98.26 & 99.04 & 98.71 & 99.18 & 99.33 \\
layer2.0.conv1        & 59.82 & 73.85 & 80.93 & 82.27 & 83.13 & 86.86 & 89.47 & 89.19 & 90.64 & 91.19 & 93.37 & 92.78 & 95.23 & 96.01 & 97.02 & 97.26 & 97.98 & 98.43 \\
layer2.0.conv2        & 72.90 & 85.04 & 88.93 & 92.21 & 92.57 & 93.19 & 94.54 & 94.80 & 94.89 & 95.74 & 97.05 & 97.01 & 97.50 & 98.04 & 98.27 & 98.28 & 98.61 & 99.16 \\
layer2.0.conv3        & 71.13 & 82.92 & 86.98 & 89.32 & 90.37 & 91.58 & 92.63 & 93.15 & 93.37 & 94.38 & 96.03 & 96.03 & 96.76 & 97.78 & 98.00 & 97.85 & 98.12 & 98.72 \\
layer2.0.downsample.0 & 81.42 & 90.05 & 92.85 & 94.08 & 94.78 & 95.22 & 95.66 & 96.28 & 96.44 & 97.40 & 97.88 & 98.07 & 98.25 & 98.86 & 99.03 & 99.08 & 99.16 & 99.31 \\
layer2.1.conv1        & 83.44 & 90.47 & 93.48 & 95.35 & 95.96 & 95.80 & 96.62 & 96.73 & 96.73 & 97.95 & 98.25 & 98.61 & 98.89 & 99.04 & 99.21 & 99.42 & 99.27 & 99.56 \\
layer2.1.conv2        & 82.60 & 90.41 & 94.05 & 96.28 & 96.22 & 96.37 & 97.17 & 96.90 & 97.76 & 98.31 & 98.68 & 99.02 & 99.14 & 99.44 & 99.46 & 99.61 & 99.56 & 99.76 \\
layer2.1.conv3        & 73.53 & 83.15 & 88.32 & 92.18 & 91.63 & 92.38 & 93.93 & 93.80 & 95.03 & 96.08 & 96.79 & 97.37 & 98.29 & 98.66 & 99.04 & 99.28 & 99.04 & 99.48 \\
layer2.2.conv1        & 74.48 & 85.75 & 88.91 & 91.48 & 92.37 & 91.97 & 93.44 & 94.26 & 94.86 & 95.85 & 96.09 & 96.88 & 97.54 & 98.33 & 98.46 & 98.97 & 98.85 & 99.08 \\
layer2.2.conv2        & 75.42 & 87.51 & 90.66 & 92.41 & 93.18 & 92.90 & 93.92 & 94.89 & 95.26 & 96.32 & 96.25 & 97.47 & 97.96 & 98.61 & 98.36 & 98.88 & 99.18 & 99.19 \\
layer2.2.conv3        & 70.68 & 82.69 & 86.23 & 90.11 & 90.35 & 91.14 & 92.36 & 93.93 & 93.28 & 95.32 & 96.33 & 96.91 & 97.50 & 98.33 & 98.39 & 98.82 & 99.04 & 99.12 \\
layer2.3.conv1        & 71.05 & 83.26 & 86.70 & 90.06 & 90.17 & 91.70 & 92.47 & 92.75 & 93.46 & 94.34 & 95.50 & 96.47 & 96.95 & 97.47 & 97.94 & 98.46 & 98.45 & 98.79 \\
layer2.3.conv2        & 74.50 & 83.83 & 88.23 & 91.11 & 91.28 & 92.90 & 93.06 & 94.27 & 94.36 & 94.89 & 97.02 & 96.67 & 97.29 & 97.94 & 98.25 & 98.73 & 98.92 & 98.61 \\
layer2.3.conv3        & 72.81 & 85.69 & 89.03 & 91.17 & 92.36 & 92.16 & 93.69 & 94.37 & 95.23 & 95.33 & 96.74 & 97.07 & 97.90 & 98.27 & 98.73 & 99.03 & 99.11 & 99.04 \\
layer3.0.conv1        & 61.26 & 74.70 & 80.18 & 83.80 & 84.81 & 85.51 & 87.13 & 88.01 & 88.81 & 90.43 & 91.82 & 92.57 & 93.79 & 95.04 & 95.56 & 96.42 & 96.69 & 97.29 \\
layer3.0.conv2        & 82.41 & 91.42 & 94.51 & 95.78 & 96.15 & 96.49 & 96.94 & 97.27 & 97.56 & 97.94 & 98.33 & 98.33 & 98.68 & 98.87 & 99.04 & 99.12 & 99.25 & 99.35 \\
layer3.0.conv3        & 71.21 & 82.18 & 86.32 & 88.76 & 89.38 & 89.93 & 91.42 & 92.04 & 92.46 & 93.73 & 94.97 & 95.40 & 96.23 & 97.28 & 97.50 & 97.95 & 98.20 & 98.61 \\
layer3.0.downsample.0 & 86.21 & 93.20 & 95.39 & 96.83 & 96.96 & 97.29 & 97.77 & 97.98 & 98.13 & 98.53 & 98.83 & 98.96 & 99.24 & 99.39 & 99.44 & 99.56 & 99.60 & 99.60 \\
layer3.1.conv1        & 86.76 & 92.91 & 95.17 & 96.46 & 96.48 & 97.02 & 97.25 & 97.80 & 97.95 & 98.37 & 98.67 & 98.82 & 99.16 & 99.38 & 99.41 & 99.55 & 99.56 & 99.63 \\
layer3.1.conv2        & 86.67 & 93.68 & 95.71 & 96.79 & 97.09 & 97.37 & 97.77 & 98.09 & 98.32 & 98.52 & 98.92 & 98.99 & 99.31 & 99.51 & 99.51 & 99.66 & 99.67 & 99.72 \\
layer3.1.conv3        & 75.18 & 86.50 & 90.08 & 92.77 & 93.44 & 94.10 & 94.53 & 95.48 & 96.12 & 96.75 & 97.53 & 97.82 & 98.46 & 99.03 & 98.98 & 99.29 & 99.38 & 99.50 \\
layer3.2.conv1        & 83.83 & 90.82 & 93.96 & 95.29 & 96.06 & 95.69 & 97.01 & 96.80 & 97.13 & 97.58 & 98.36 & 98.28 & 98.65 & 99.08 & 99.32 & 99.27 & 99.55 & 99.58 \\
layer3.2.conv2        & 84.24 & 92.07 & 94.68 & 95.84 & 96.17 & 96.22 & 97.35 & 97.26 & 97.53 & 97.93 & 98.62 & 98.69 & 98.92 & 99.22 & 99.42 & 99.40 & 99.62 & 99.66 \\
layer3.2.conv3        & 75.66 & 85.49 & 89.88 & 92.30 & 93.13 & 92.99 & 95.21 & 94.99 & 95.49 & 96.24 & 97.61 & 97.65 & 98.28 & 98.83 & 99.14 & 99.10 & 99.44 & 99.54 \\
layer3.3.conv1        & 80.26 & 90.40 & 92.47 & 94.60 & 94.41 & 94.94 & 95.91 & 95.87 & 96.48 & 97.20 & 98.17 & 98.13 & 98.67 & 98.84 & 99.03 & 99.09 & 99.32 & 99.53 \\
layer3.3.conv2        & 83.78 & 91.46 & 94.33 & 95.39 & 96.14 & 96.59 & 96.90 & 97.70 & 97.88 & 98.34 & 98.75 & 98.97 & 99.12 & 99.40 & 99.49 & 99.55 & 99.62 & 99.70 \\
layer3.3.conv3        & 76.82 & 87.45 & 91.13 & 93.10 & 93.32 & 94.60 & 95.05 & 96.24 & 96.35 & 97.10 & 98.08 & 98.33 & 98.66 & 99.06 & 99.26 & 99.36 & 99.43 & 99.53 \\
layer3.4.conv1        & 77.62 & 87.58 & 91.12 & 92.47 & 93.41 & 94.15 & 94.67 & 96.03 & 96.16 & 96.44 & 97.39 & 97.55 & 98.16 & 98.85 & 98.84 & 99.19 & 99.38 & 99.37 \\
layer3.4.conv2        & 83.13 & 91.87 & 94.67 & 95.63 & 96.17 & 96.66 & 97.32 & 97.48 & 97.87 & 98.14 & 98.61 & 98.71 & 99.11 & 99.42 & 99.53 & 99.63 & 99.73 & 99.72 \\
layer3.4.conv3        & 76.09 & 87.14 & 91.38 & 92.55 & 93.79 & 94.55 & 95.29 & 95.94 & 96.44 & 97.00 & 97.93 & 98.10 & 98.52 & 99.14 & 99.23 & 99.43 & 99.62 & 99.62 \\
layer3.5.conv1        & 73.48 & 85.83 & 89.67 & 91.89 & 92.45 & 93.36 & 94.45 & 94.68 & 94.92 & 95.73 & 96.69 & 97.02 & 97.61 & 98.39 & 98.88 & 98.86 & 99.13 & 99.23 \\
layer3.5.conv2        & 81.39 & 90.78 & 93.61 & 95.30 & 95.85 & 96.28 & 96.62 & 97.39 & 97.56 & 98.09 & 98.60 & 98.59 & 99.00 & 99.33 & 99.53 & 99.50 & 99.61 & 99.69 \\
layer3.5.conv3        & 73.98 & 85.35 & 89.23 & 91.80 & 92.53 & 93.13 & 94.12 & 95.16 & 95.16 & 95.98 & 96.99 & 97.31 & 98.06 & 98.69 & 99.14 & 99.21 & 99.23 & 99.44 \\
layer4.0.conv1        & 64.48 & 77.49 & 82.89 & 85.92 & 87.00 & 87.93 & 89.26 & 90.33 & 91.14 & 92.09 & 93.65 & 93.85 & 94.79 & 95.83 & 96.25 & 96.68 & 97.19 & 97.40 \\
layer4.0.conv2        & 84.02 & 94.14 & 97.01 & 97.83 & 98.07 & 98.17 & 98.34 & 98.45 & 98.66 & 98.82 & 99.05 & 99.05 & 99.17 & 99.36 & 99.38 & 99.44 & 99.53 & 99.57 \\
layer4.0.conv3        & 72.53 & 83.17 & 87.02 & 89.51 & 90.39 & 90.99 & 92.21 & 93.09 & 93.73 & 94.63 & 95.87 & 96.01 & 96.73 & 97.61 & 97.78 & 98.02 & 98.38 & 98.60 \\
layer4.0.downsample.0 & 85.36 & 92.49 & 94.97 & 96.03 & 96.39 & 96.77 & 97.13 & 97.42 & 97.63 & 97.92 & 98.33 & 98.41 & 98.70 & 99.00 & 99.13 & 99.20 & 99.33 & 99.44 \\
layer4.1.conv1        & 81.34 & 90.10 & 93.30 & 94.91 & 95.55 & 95.99 & 96.52 & 97.17 & 97.47 & 98.00 & 98.60 & 98.59 & 98.86 & 99.23 & 99.35 & 99.42 & 99.56 & 99.65 \\
layer4.1.conv2        & 83.00 & 91.75 & 94.76 & 96.19 & 96.68 & 97.03 & 97.39 & 97.85 & 98.14 & 98.52 & 98.95 & 98.96 & 99.18 & 99.46 & 99.55 & 99.57 & 99.69 & 99.73 \\
layer4.1.conv3        & 75.37 & 86.50 & 90.65 & 92.84 & 93.42 & 94.08 & 94.81 & 95.43 & 95.88 & 96.63 & 97.40 & 97.48 & 97.79 & 98.47 & 98.59 & 98.64 & 98.88 & 99.00 \\
layer4.2.conv1        & 74.54 & 86.22 & 90.56 & 92.98 & 93.83 & 94.53 & 95.26 & 96.01 & 96.63 & 97.23 & 97.82 & 97.94 & 98.35 & 98.84 & 98.93 & 99.03 & 99.21 & 99.39 \\
layer4.2.conv2        & 79.66 & 90.89 & 95.74 & 97.04 & 97.38 & 97.52 & 97.81 & 98.07 & 98.36 & 98.56 & 98.77 & 98.82 & 99.01 & 99.23 & 99.29 & 99.38 & 99.49 & 99.60 \\
layer4.2.conv3        & 66.98 & 79.13 & 84.53 & 87.56 & 88.74 & 89.37 & 90.60 & 91.60 & 92.61 & 93.82 & 95.04 & 95.41 & 96.41 & 97.57 & 97.79 & 98.08 & 98.51 & 98.86 \\
fc                    & 86.44 & 94.47 & 96.69 & 97.58 & 97.87 & 98.10 & 98.44 & 98.65 & 98.79 & 99.04 & 99.25 & 99.27 & 99.40 & 99.49 & 99.54 & 99.56 & 99.57 & 99.60 \\
\bottomrule
\end{tabular}}
\end{table}

\begin{table}[p]
\caption{Sparsity budgets of ResNet-50 using STDS + S-LATS on ImageNet (1024 batch size).}
\label{tab:budget-slats-ann-largeb}
\centering
\resizebox{\textwidth}{!}{
\begin{tabular}{lcccccccccccccccccc}
\toprule
Final threshold $D$   & 0.1   & 0.2   & 0.23  & 0.25  & 0.3   & 0.4   & 0.475 & 0.5   & 0.6   & 0.73  & 0.8   & 1.0   & 1.13  & 1.5   & 2.0   & 3.0   & 3.5   & 4.0 \\
\midrule
Top-1 Acc. (\%)       & 76.61 & 76.15 & 75.97 & 75.88 & 75.58 & 74.90 & 74.61 & 74.04 & 73.68 & 72.84 & 72.73 & 71.67 & 70.68 & 69.20 & 67.16 & 63.81 & 61.72 & 60.40 \\
\midrule
Layer(s)              & \multicolumn{18}{c}{Sparsity (\%)} \\
\midrule
Overall               & 79.00 & 88.81 & 90.15 & 90.92 & 92.34 & 94.19 & 95.01 & 95.25 & 95.93 & 96.53 & 96.78 & 97.30 & 97.54 & 98.00 & 98.38 & 98.79 & 98.93 & 99.04 \\
\midrule
conv1                 & 40.67 & 46.90 & 55.02 & 53.64 & 56.36 & 62.73 & 66.07 & 64.46 & 71.75 & 70.97 & 72.86 & 74.04 & 76.87 & 77.51 & 79.39 & 83.93 & 84.56 & 87.38 \\
layer1.0.conv1        & 52.32 & 58.42 & 65.16 & 64.92 & 65.36 & 72.02 & 74.71 & 74.07 & 79.30 & 79.30 & 81.84 & 82.93 & 85.82 & 83.20 & 87.96 & 90.33 & 91.04 & 90.97 \\
layer1.0.conv2        & 75.36 & 82.51 & 85.46 & 86.66 & 87.70 & 89.99 & 91.46 & 91.87 & 92.69 & 93.60 & 94.34 & 94.89 & 95.41 & 95.79 & 96.87 & 97.73 & 98.06 & 98.10 \\
layer1.0.conv3        & 70.77 & 77.27 & 80.81 & 81.59 & 83.83 & 86.68 & 88.84 & 89.89 & 90.94 & 92.82 & 93.27 & 94.27 & 94.79 & 95.66 & 96.89 & 97.58 & 97.92 & 98.32 \\
layer1.0.downsample.0 & 64.12 & 71.25 & 74.69 & 75.47 & 78.57 & 82.09 & 83.73 & 83.44 & 86.99 & 87.38 & 88.92 & 89.49 & 90.48 & 92.49 & 93.05 & 94.68 & 94.98 & 95.31 \\
layer1.1.conv1        & 74.98 & 79.80 & 82.61 & 82.99 & 85.29 & 86.96 & 87.48 & 89.42 & 90.72 & 91.63 & 92.41 & 94.35 & 93.60 & 95.78 & 96.36 & 97.36 & 97.49 & 98.77 \\
layer1.1.conv2        & 78.53 & 83.95 & 86.02 & 85.23 & 89.04 & 88.26 & 90.46 & 92.69 & 92.72 & 93.74 & 94.16 & 95.35 & 95.43 & 96.98 & 97.64 & 98.41 & 98.84 & 99.33 \\
layer1.1.conv3        & 77.44 & 80.13 & 81.32 & 81.99 & 84.70 & 85.31 & 88.08 & 90.27 & 91.71 & 93.14 & 93.58 & 94.20 & 94.73 & 96.28 & 96.94 & 98.08 & 98.66 & 99.07 \\
layer1.2.conv1        & 70.75 & 76.64 & 77.59 & 79.77 & 80.70 & 85.06 & 86.50 & 87.58 & 89.15 & 90.91 & 92.42 & 92.96 & 93.51 & 94.43 & 96.46 & 98.02 & 97.99 & 97.89 \\
layer1.2.conv2        & 70.35 & 76.21 & 78.69 & 80.13 & 83.24 & 86.20 & 86.69 & 88.74 & 91.18 & 91.75 & 92.65 & 94.47 & 93.40 & 95.38 & 97.15 & 98.01 & 97.97 & 98.34 \\
layer1.2.conv3        & 77.30 & 77.87 & 82.51 & 82.66 & 84.60 & 86.46 & 89.57 & 88.23 & 92.04 & 92.70 & 92.96 & 95.35 & 93.87 & 96.25 & 97.64 & 98.19 & 98.46 & 98.26 \\
layer2.0.conv1        & 69.29 & 72.99 & 77.38 & 75.72 & 79.36 & 81.06 & 85.20 & 85.80 & 88.93 & 88.83 & 90.87 & 92.46 & 92.42 & 94.80 & 95.63 & 96.78 & 97.29 & 97.39 \\
layer2.0.conv2        & 76.86 & 84.95 & 87.06 & 88.11 & 89.32 & 91.52 & 93.28 & 93.42 & 94.01 & 94.81 & 95.40 & 95.92 & 96.71 & 96.96 & 97.50 & 98.52 & 98.50 & 98.62 \\
layer2.0.conv3        & 75.99 & 80.48 & 84.57 & 85.12 & 85.54 & 88.53 & 90.73 & 90.84 & 91.45 & 92.50 & 93.01 & 94.82 & 94.87 & 95.76 & 97.05 & 97.80 & 97.77 & 98.13 \\
layer2.0.downsample.0 & 84.75 & 89.95 & 91.27 & 91.94 & 92.58 & 94.84 & 95.10 & 95.29 & 95.79 & 96.67 & 96.86 & 97.33 & 97.49 & 97.89 & 98.41 & 98.67 & 98.93 & 98.96 \\
layer2.1.conv1        & 89.09 & 90.01 & 91.85 & 93.14 & 92.72 & 94.67 & 95.55 & 95.54 & 95.65 & 97.00 & 97.05 & 97.73 & 98.01 & 98.23 & 98.93 & 99.25 & 99.44 & 99.39 \\
layer2.1.conv2        & 87.41 & 89.84 & 92.13 & 93.68 & 92.94 & 95.04 & 95.99 & 96.01 & 96.07 & 97.35 & 97.49 & 98.36 & 98.26 & 98.67 & 99.17 & 99.47 & 99.65 & 99.65 \\
layer2.1.conv3        & 81.25 & 82.44 & 86.02 & 88.49 & 87.05 & 90.50 & 92.51 & 92.16 & 91.95 & 94.20 & 94.48 & 95.95 & 96.12 & 97.09 & 97.89 & 98.80 & 99.16 & 99.10 \\
layer2.2.conv1        & 77.10 & 85.06 & 86.77 & 88.37 & 89.76 & 91.35 & 91.45 & 92.98 & 93.09 & 93.55 & 94.63 & 95.11 & 96.12 & 96.55 & 97.05 & 97.93 & 98.79 & 98.58 \\
layer2.2.conv2        & 79.02 & 87.80 & 88.76 & 89.76 & 89.97 & 92.55 & 92.77 & 92.61 & 93.93 & 94.61 & 95.09 & 95.26 & 96.40 & 97.14 & 97.50 & 98.37 & 99.17 & 98.94 \\
layer2.2.conv3        & 71.18 & 82.41 & 83.43 & 85.35 & 87.02 & 89.28 & 90.05 & 90.93 & 91.46 & 92.80 & 93.58 & 93.83 & 95.80 & 95.97 & 96.96 & 97.96 & 98.82 & 98.57 \\
layer2.3.conv1        & 71.29 & 81.69 & 84.35 & 85.66 & 87.46 & 89.64 & 90.34 & 90.71 & 91.64 & 93.72 & 93.07 & 94.49 & 95.54 & 95.78 & 97.13 & 97.82 & 97.87 & 98.08 \\
layer2.3.conv2        & 73.69 & 83.46 & 85.49 & 86.71 & 88.83 & 90.38 & 91.28 & 92.67 & 93.19 & 94.32 & 93.90 & 95.26 & 96.04 & 96.57 & 97.52 & 97.67 & 97.97 & 98.33 \\
layer2.3.conv3        & 73.48 & 84.16 & 86.39 & 86.28 & 89.51 & 90.65 & 92.20 & 92.08 & 93.02 & 94.78 & 94.69 & 95.57 & 95.97 & 96.83 & 97.56 & 98.05 & 98.25 & 98.75 \\
layer3.0.conv1        & 62.93 & 75.02 & 77.81 & 79.06 & 81.57 & 83.61 & 85.41 & 85.64 & 87.35 & 88.80 & 89.57 & 90.65 & 91.63 & 92.85 & 94.27 & 95.65 & 96.26 & 96.18 \\
layer3.0.conv2        & 82.70 & 91.71 & 92.62 & 93.54 & 94.69 & 95.64 & 96.15 & 96.70 & 96.96 & 97.45 & 97.57 & 97.90 & 98.01 & 98.43 & 98.70 & 98.96 & 99.07 & 99.11 \\
layer3.0.conv3        & 72.49 & 82.71 & 84.57 & 85.53 & 87.34 & 88.74 & 89.68 & 90.66 & 91.39 & 92.98 & 92.80 & 93.95 & 94.79 & 95.56 & 96.47 & 97.36 & 97.68 & 97.79 \\
layer3.0.downsample.0 & 87.88 & 93.93 & 94.74 & 95.39 & 96.36 & 96.89 & 97.35 & 97.51 & 97.93 & 98.35 & 98.38 & 98.69 & 98.76 & 99.06 & 99.25 & 99.36 & 99.44 & 99.49 \\
layer3.1.conv1        & 86.70 & 92.88 & 93.65 & 94.35 & 95.64 & 96.29 & 96.63 & 97.02 & 97.19 & 97.72 & 97.75 & 97.94 & 98.49 & 98.69 & 99.02 & 99.25 & 99.39 & 99.41 \\
layer3.1.conv2        & 85.98 & 93.44 & 93.99 & 94.73 & 96.13 & 96.67 & 97.06 & 97.37 & 97.48 & 98.13 & 98.19 & 98.35 & 98.87 & 98.90 & 99.11 & 99.34 & 99.48 & 99.43 \\
layer3.1.conv3        & 74.52 & 85.38 & 87.49 & 88.09 & 90.54 & 92.11 & 93.02 & 93.65 & 94.33 & 95.16 & 95.69 & 96.05 & 97.08 & 97.38 & 98.07 & 98.38 & 98.85 & 98.90 \\
layer3.2.conv1        & 84.36 & 91.82 & 93.20 & 93.33 & 93.93 & 95.07 & 95.61 & 96.23 & 96.26 & 97.09 & 97.40 & 97.48 & 98.08 & 98.13 & 98.50 & 99.02 & 99.16 & 99.13 \\
layer3.2.conv2        & 83.49 & 91.97 & 93.01 & 93.50 & 94.42 & 95.51 & 96.15 & 96.73 & 96.65 & 97.35 & 97.59 & 97.67 & 98.18 & 98.21 & 98.75 & 99.15 & 99.20 & 99.22 \\
layer3.2.conv3        & 75.10 & 85.78 & 87.87 & 88.06 & 89.36 & 91.66 & 92.72 & 93.96 & 94.17 & 95.06 & 95.61 & 95.91 & 96.77 & 96.89 & 98.07 & 98.41 & 98.71 & 98.83 \\
layer3.3.conv1        & 80.11 & 89.01 & 90.31 & 91.45 & 92.54 & 93.60 & 94.34 & 95.01 & 95.78 & 96.18 & 96.36 & 97.12 & 97.36 & 97.63 & 98.26 & 98.61 & 99.07 & 99.02 \\
layer3.3.conv2        & 82.74 & 91.07 & 92.45 & 92.82 & 93.74 & 95.54 & 96.15 & 96.43 & 96.95 & 97.39 & 97.36 & 97.85 & 98.00 & 98.54 & 98.79 & 99.22 & 99.37 & 99.33 \\
layer3.3.conv3        & 75.19 & 86.40 & 87.38 & 88.85 & 89.98 & 92.03 & 93.14 & 93.50 & 94.62 & 95.83 & 95.34 & 96.32 & 96.50 & 97.57 & 98.04 & 98.73 & 98.93 & 98.85 \\
layer3.4.conv1        & 76.54 & 86.54 & 88.09 & 89.35 & 90.46 & 92.61 & 93.25 & 94.02 & 94.56 & 95.68 & 95.20 & 95.73 & 96.46 & 97.12 & 97.73 & 98.47 & 98.51 & 98.69 \\
layer3.4.conv2        & 82.23 & 90.86 & 92.02 & 92.50 & 93.93 & 95.47 & 96.05 & 95.86 & 96.63 & 97.20 & 97.72 & 97.97 & 97.90 & 98.59 & 98.88 & 99.21 & 99.24 & 99.40 \\
layer3.4.conv3        & 75.51 & 85.30 & 87.73 & 88.50 & 90.23 & 91.95 & 92.59 & 93.37 & 94.24 & 95.53 & 95.63 & 96.11 & 96.72 & 97.46 & 98.11 & 98.56 & 98.64 & 98.90 \\
layer3.5.conv1        & 73.52 & 84.35 & 86.23 & 86.85 & 89.11 & 91.01 & 91.75 & 92.27 & 93.27 & 94.56 & 94.89 & 95.45 & 95.75 & 96.62 & 97.41 & 98.11 & 98.50 & 98.62 \\
layer3.5.conv2        & 80.96 & 90.30 & 91.09 & 91.95 & 93.39 & 95.08 & 95.45 & 95.67 & 96.55 & 97.03 & 97.39 & 97.74 & 97.88 & 98.39 & 98.76 & 99.10 & 99.30 & 99.43 \\
layer3.5.conv3        & 73.55 & 83.72 & 85.38 & 86.40 & 88.65 & 90.83 & 91.40 & 91.96 & 93.12 & 94.61 & 94.68 & 95.12 & 95.74 & 96.61 & 97.57 & 98.10 & 98.47 & 98.61 \\
layer4.0.conv1        & 65.17 & 78.27 & 79.76 & 81.06 & 83.34 & 86.00 & 87.24 & 87.75 & 89.04 & 90.32 & 90.94 & 91.87 & 92.49 & 93.47 & 94.63 & 95.65 & 96.29 & 96.39 \\
layer4.0.conv2        & 84.07 & 93.29 & 94.58 & 95.35 & 96.59 & 97.99 & 98.29 & 98.30 & 98.51 & 98.68 & 98.75 & 98.92 & 98.97 & 99.09 & 99.21 & 99.37 & 99.41 & 99.45 \\
layer4.0.conv3        & 72.08 & 83.90 & 85.07 & 86.04 & 87.64 & 89.90 & 90.95 & 91.05 & 92.18 & 93.23 & 93.78 & 94.72 & 95.02 & 95.93 & 96.67 & 97.51 & 97.81 & 98.10 \\
layer4.0.downsample.0 & 85.95 & 92.56 & 93.61 & 94.12 & 95.02 & 96.16 & 96.57 & 96.80 & 97.16 & 97.64 & 97.66 & 98.01 & 98.22 & 98.54 & 98.83 & 99.08 & 99.23 & 99.24 \\
layer4.1.conv1        & 77.97 & 89.09 & 89.96 & 91.06 & 92.44 & 94.43 & 94.94 & 95.23 & 95.96 & 96.52 & 96.91 & 97.39 & 97.68 & 98.12 & 98.49 & 98.88 & 99.01 & 99.18 \\
layer4.1.conv2        & 80.73 & 90.68 & 91.40 & 92.49 & 94.03 & 95.76 & 96.24 & 96.56 & 97.11 & 97.62 & 97.88 & 98.30 & 98.47 & 98.83 & 99.02 & 99.32 & 99.38 & 99.48 \\
layer4.1.conv3        & 71.47 & 85.15 & 86.25 & 87.73 & 89.75 & 92.46 & 92.96 & 93.54 & 94.38 & 95.09 & 95.65 & 96.33 & 96.71 & 97.38 & 97.58 & 98.12 & 98.23 & 98.56 \\
layer4.2.conv1        & 69.25 & 83.86 & 86.10 & 86.64 & 88.26 & 91.54 & 93.58 & 93.38 & 94.55 & 95.19 & 95.67 & 96.81 & 96.90 & 97.62 & 97.87 & 98.41 & 98.48 & 98.75 \\
layer4.2.conv2        & 75.76 & 87.45 & 89.41 & 89.79 & 91.75 & 94.72 & 96.47 & 96.34 & 97.50 & 97.70 & 98.03 & 98.46 & 98.47 & 98.83 & 98.96 & 99.27 & 99.29 & 99.39 \\
layer4.2.conv3        & 62.85 & 77.36 & 79.58 & 80.52 & 82.77 & 86.37 & 88.80 & 88.74 & 90.50 & 91.41 & 91.98 & 93.85 & 93.97 & 95.32 & 96.19 & 97.41 & 97.61 & 98.02 \\
fc                    & 88.02 & 95.57 & 96.39 & 96.77 & 97.50 & 98.29 & 98.54 & 98.65 & 98.92 & 99.16 & 99.20 & 99.35 & 99.41 & 99.52 & 99.62 & 99.69 & 99.69 & 99.65 \\
\bottomrule
\end{tabular}}
\end{table}

\begin{table}[p]
\caption{Sparsity budgets of MobileNet-V1 using STDS + S-LATS on ImageNet.}
\label{tab:budget-slats-mbnet}
\centering
\begin{tabular}{lcccc}
\toprule
Final threshold $D$   & 0.4 & 0.6 & 0.8 & 0.9 \\
\midrule
Top-1 Acc. (\%) & 68.44 & 66.64 & 65.41 & 65.13 \\
\midrule
Layer(s)   & \multicolumn{4}{c}{Sparsity (\%)} \\
\midrule
Overall    & 81.84 & 85.87 & 88.22 & 89.08 \\
\midrule
model.0.0  & 51.39 & 62.27 & 57.06 & 55.90 \\
model.1.0  & 46.53 & 62.85 & 54.86 & 56.94 \\
model.1.3  & 60.45 & 76.76 & 67.63 & 68.51 \\
model.2.0  & 11.28 & 21.70 & 18.75 & 18.23 \\
model.2.3  & 52.82 & 61.65 & 61.56 & 68.92 \\
model.3.0  & 23.96 & 25.43 & 26.04 & 31.86 \\
model.3.3  & 53.38 & 59.64 & 64.90 & 67.77 \\
model.4.0  & 2.69  & 5.64  & 8.94  & 5.38  \\
model.4.3  & 62.50 & 68.58 & 72.97 & 75.10 \\
model.5.0  & 20.70 & 26.26 & 29.34 & 31.38 \\
model.5.3  & 68.64 & 74.57 & 78.17 & 79.69 \\
model.6.0  & 15.32 & 20.49 & 20.40 & 25.00 \\
model.6.3  & 79.08 & 84.26 & 87.14 & 87.85 \\
model.7.0  & 24.80 & 29.28 & 34.44 & 37.83 \\
model.7.3  & 81.65 & 86.53 & 89.26 & 90.23 \\
model.8.0  & 35.72 & 40.89 & 48.55 & 48.59 \\
model.8.3  & 81.10 & 85.35 & 88.22 & 89.20 \\
model.9.0  & 40.06 & 42.73 & 46.59 & 50.85 \\
model.9.3  & 79.38 & 84.02 & 87.30 & 87.97 \\
model.10.0 & 33.88 & 40.41 & 42.99 & 45.33 \\
model.10.3 & 75.01 & 80.66 & 83.84 & 85.23 \\
model.11.0 & 26.52 & 28.56 & 31.77 & 31.42 \\
model.11.3 & 71.13 & 77.06 & 80.75 & 82.36 \\
model.12.0 & 10.44 & 13.24 & 14.67 & 16.06 \\
model.12.3 & 80.64 & 84.83 & 87.28 & 88.15 \\
model.13.0 & 43.65 & 46.79 & 48.35 & 49.09 \\
model.13.3 & 82.10 & 85.47 & 87.57 & 88.36 \\
fc         & 92.40 & 95.25 & 96.54 & 96.93\\
\bottomrule
\end{tabular}
\end{table}

\begin{table}[p]
\caption{Sparsity budgets of ResNet-50 using PGH scheduler in pruning at initialization setting on ImageNet.}
\label{tab:budget-pgh-pat}
\centering
\scalebox{0.95}{
\begin{tabular}{lcccc}
\toprule
Final threshold $D$   & 0.1   & 0.11   & 0.13   & 0.15   \\
\midrule
Top-1 Acc. (\%)       & 74.69 & 72.89  & 68.23  & 62.22  \\
\midrule
Layer(s)              & \multicolumn{4}{c}{Sparsity (\%)} \\
\midrule
Overall               & 87.16 & 90.00  & 93.11  & 95.64  \\
\midrule
conv1                 & 32.05 & 33.48  & 34.66  & 34.75  \\
layer1.0.conv1        & 38.92 & 38.70  & 37.48  & 33.89  \\
layer1.0.conv2        & 67.15 & 67.55  & 66.07  & 59.91  \\
layer1.0.conv3        & 59.89 & 60.16  & 57.60  & 49.34  \\
layer1.0.downsample.0 & 58.89 & 61.24  & 64.05  & 60.72  \\
layer1.1.conv1        & 60.78 & 63.04  & 61.00  & 53.19  \\
layer1.1.conv2        & 65.13 & 64.05  & 60.48  & 54.98  \\
layer1.1.conv3        & 57.58 & 55.88  & 49.53  & 42.51  \\
layer1.2.conv1        & 53.80 & 55.80  & 51.01  & 43.32  \\
layer1.2.conv2        & 58.37 & 59.34  & 55.33  & 51.22  \\
layer1.2.conv3        & 61.24 & 62.34  & 60.16  & 48.72  \\
layer2.0.conv1        & 49.99 & 49.88  & 45.61  & 42.38  \\
layer2.0.conv2        & 70.38 & 69.61  & 72.09  & 86.96  \\
layer2.0.conv3        & 63.37 & 61.13  & 59.27  & 82.79  \\
layer2.0.downsample.0 & 75.97 & 74.72  & 70.28  & 59.11  \\
layer2.1.conv1        & 77.21 & 75.13  & 66.76  & 59.39  \\
layer2.1.conv2        & 76.35 & 74.61  & 73.73  & 86.70  \\
layer2.1.conv3        & 63.60 & 60.96  & 58.64  & 81.81  \\
layer2.2.conv1        & 68.74 & 66.52  & 56.07  & 49.60  \\
layer2.2.conv2        & 70.42 & 69.81  & 69.10  & 87.02  \\
layer2.2.conv3        & 63.30 & 60.06  & 54.06  & 83.67  \\
layer2.3.conv1        & 63.87 & 59.87  & 44.49  & 43.37  \\
layer2.3.conv2        & 70.13 & 68.83  & 67.57  & 86.60  \\
layer2.3.conv3        & 66.09 & 62.03  & 49.41  & 82.74  \\
layer3.0.conv1        & 54.05 & 54.86  & 76.17  & 100.00 \\
layer3.0.conv2        & 82.47 & 89.52  & 99.25  & 100.00 \\
layer3.0.conv3        & 65.61 & 82.20  & 98.40  & 100.00 \\
layer3.0.downsample.0 & 78.21 & 71.87  & 54.37  & 53.19  \\
layer3.1.conv1        & 78.03 & 78.40  & 79.45  & 100.00 \\
layer3.1.conv2        & 82.93 & 92.45  & 98.20  & 100.00 \\
layer3.1.conv3        & 69.54 & 87.02  & 96.55  & 100.00 \\
layer3.2.conv1        & 74.19 & 72.72  & 89.19  & 100.00 \\
layer3.2.conv2        & 81.22 & 89.29  & 99.56  & 100.00 \\
layer3.2.conv3        & 68.24 & 82.87  & 99.03  & 100.00 \\
layer3.3.conv1        & 66.58 & 63.63  & 88.10  & 100.00 \\
layer3.3.conv2        & 78.70 & 88.28  & 99.54  & 100.00 \\
layer3.3.conv3        & 65.76 & 80.80  & 99.00  & 100.00 \\
layer3.4.conv1        & 57.57 & 59.99  & 75.83  & 100.00 \\
layer3.4.conv2        & 79.10 & 88.66  & 98.66  & 100.00 \\
layer3.4.conv3        & 62.33 & 80.67  & 97.62  & 100.00 \\
layer3.5.conv1        & 53.63 & 58.43  & 78.13  & 100.00 \\
layer3.5.conv2        & 79.08 & 88.88  & 99.04  & 100.00 \\
layer3.5.conv3        & 58.30 & 80.30  & 98.32  & 100.00 \\
layer4.0.conv1        & 90.52 & 100.00 & 100.00 & 100.00 \\
layer4.0.conv2        & 99.80 & 100.00 & 100.00 & 100.00 \\
layer4.0.conv3        & 99.37 & 100.00 & 100.00 & 100.00 \\
layer4.0.downsample.0 & 75.46 & 79.81  & 86.40  & 91.34  \\
layer4.1.conv1        & 97.11 & 100.00 & 100.00 & 100.00 \\
layer4.1.conv2        & 99.90 & 100.00 & 100.00 & 100.00 \\
layer4.1.conv3        & 99.56 & 100.00 & 100.00 & 100.00 \\
layer4.2.conv1        & 98.61 & 100.00 & 100.00 & 100.00 \\
layer4.2.conv2        & 99.96 & 100.00 & 100.00 & 100.00 \\
layer4.2.conv3        & 99.74 & 100.00 & 100.00 & 100.00 \\
fc                    & 83.95 & 82.61  & 81.69  & 86.05  \\
\bottomrule
\end{tabular}}
\end{table}

\begin{table}[htp]
\caption{Sparsity budgets of SEW ResNet-18 using STDS + S-LATS on ImageNet.}
\label{tab:budget-slats-snn}
\centering
\resizebox{\textwidth}{!}{
\begin{tabular}{lccccccccc}
\toprule
Final threshold $D$     & 0.5   & 1.0   & 2.0   & 3.0   & 5.0   & 7.0   & 10    & 15    & 20    \\
\midrule
Top-1 Acc. (\%)         & 62.59 & 62.3  & 60.806 & 59.816 & 57.572 & 55.454 & 53.74 & 50.024 & 47.586 \\
\midrule
Layer(s)                & \multicolumn{9}{c}{Sparsity (\%)} \\
\midrule
Overall                 & 60.11 & 71.18 & 79.74 & 83.75 & 88.11 & 89.96 & 92.57 & 94.30 & 95.21 \\
\midrule
conv1                   & 38.70 & 49.83 & 62.40 & 66.90 & 73.93 & 79.44 & 80.71 & 85.43 & 86.90 \\
layer1.0.conv1.0        & 48.98 & 60.42 & 73.58 & 76.98 & 82.13 & 86.19 & 87.39 & 90.02 & 91.90 \\
layer1.0.conv2.0        & 37.88 & 50.73 & 61.17 & 66.69 & 71.79 & 76.28 & 79.39 & 82.53 & 84.40 \\
layer1.1.conv1.0        & 40.37 & 54.44 & 66.75 & 72.12 & 77.59 & 80.65 & 84.02 & 86.53 & 87.84 \\
layer1.1.conv2.0        & 39.94 & 51.82 & 62.50 & 67.47 & 73.18 & 77.47 & 80.40 & 84.16 & 85.53 \\
layer2.0.conv1.0        & 39.78 & 52.67 & 64.46 & 68.41 & 74.30 & 77.91 & 81.07 & 83.65 & 85.84 \\
layer2.0.conv2.0        & 46.68 & 59.70 & 70.55 & 74.26 & 79.64 & 83.32 & 85.99 & 88.92 & 90.62 \\
layer2.0.downsample.0.0 & 16.81 & 25.96 & 36.11 & 41.39 & 47.29 & 54.48 & 59.00 & 63.48 & 67.22 \\
layer2.1.conv1.0        & 48.87 & 62.95 & 73.58 & 77.19 & 81.24 & 84.64 & 87.51 & 89.59 & 91.89 \\
layer2.1.conv2.0        & 49.85 & 63.01 & 71.87 & 77.04 & 81.44 & 84.69 & 87.65 & 90.72 & 91.80 \\
layer3.0.conv1.0        & 48.98 & 60.98 & 71.23 & 75.30 & 80.16 & 83.48 & 85.91 & 88.49 & 90.02 \\
layer3.0.conv2.0        & 56.54 & 68.36 & 77.14 & 81.23 & 86.21 & 88.38 & 91.20 & 93.05 & 94.38 \\
layer3.0.downsample.0.0 & 27.11 & 37.56 & 48.76 & 54.21 & 61.76 & 66.29 & 70.50 & 75.62 & 79.10 \\
layer3.1.conv1.0        & 61.25 & 72.95 & 80.88 & 83.91 & 87.98 & 89.96 & 91.59 & 93.66 & 94.48 \\
layer3.1.conv2.0        & 60.16 & 71.00 & 79.03 & 82.63 & 86.38 & 88.59 & 90.49 & 92.50 & 93.61 \\
layer4.0.conv1.0        & 58.92 & 70.02 & 78.17 & 82.08 & 86.23 & 88.38 & 90.44 & 92.65 & 93.86 \\
layer4.0.conv2.0        & 64.63 & 74.90 & 82.40 & 85.93 & 89.74 & 91.58 & 93.27 & 95.14 & 95.96 \\
layer4.0.downsample.0.0 & 31.91 & 42.72 & 53.25 & 58.97 & 65.57 & 69.29 & 72.34 & 75.55 & 77.45 \\
layer4.1.conv1.0        & 67.33 & 77.95 & 85.81 & 89.57 & 93.42 & 95.61 & 96.99 & 97.89 & 98.38 \\
layer4.1.conv2.0        & 63.04 & 72.57 & 80.14 & 84.08 & 88.71 & 91.79 & 94.04 & 95.35 & 96.01 \\
fc                      & 33.00 & 52.40 & 71.29 & 79.66 & 86.71 & 89.96 & 92.52 & 94.95 & 96.27 \\
\bottomrule
\end{tabular}}
\end{table}

\begin{table}[htp]
\caption{Sparsity budgets of SEW ResNet-18 using our implementation of original STDS (STDS + Sine scheduler) on ImageNet.}
\label{tab:budget-sine-snn}
\centering
\begin{tabular}{lccccc}
\toprule
Final threshold $D$     & 0.6   & 0.8   & 1.5   & 3.0   & 5.0   \\
\midrule
Top-1 Acc. (\%)         & 61.114 & 60.218 & 57.458 & 52.966 & 48.436 \\
\midrule
Layer(s)                & \multicolumn{5}{c}{Sparsity (\%)} \\
\midrule
Overall                 & 76.06 & 79.91 & 85.91 & 90.62 & 93.19 \\
\midrule
conv1                   & 48.07 & 53.99 & 68.15 & 79.49 & 85.71 \\
layer1.0.conv1.0        & 61.37 & 69.80 & 81.65 & 88.07 & 91.37 \\
layer1.0.conv2.0        & 56.20 & 62.73 & 73.62 & 79.45 & 83.24 \\
layer1.1.conv1.0        & 56.73 & 64.55 & 79.27 & 84.19 & 87.55 \\
layer1.1.conv2.0        & 58.62 & 65.86 & 73.35 & 79.39 & 84.28 \\
layer2.0.conv1.0        & 58.34 & 66.05 & 75.50 & 80.44 & 85.06 \\
layer2.0.conv2.0        & 66.05 & 72.17 & 80.13 & 85.38 & 88.55 \\
layer2.0.downsample.0.0 & 28.70 & 36.28 & 49.60 & 58.87 & 64.78 \\
layer2.1.conv1.0        & 68.32 & 74.54 & 82.47 & 87.28 & 89.96 \\
layer2.1.conv2.0        & 68.65 & 73.85 & 80.92 & 86.52 & 89.55 \\
layer3.0.conv1.0        & 68.04 & 73.05 & 80.11 & 86.17 & 88.89 \\
layer3.0.conv2.0        & 74.94 & 79.00 & 85.20 & 89.69 & 92.33 \\
layer3.0.downsample.0.0 & 44.28 & 50.82 & 62.14 & 70.68 & 76.31 \\
layer3.1.conv1.0        & 78.62 & 82.50 & 87.89 & 91.36 & 93.57 \\
layer3.1.conv2.0        & 76.93 & 80.73 & 86.32 & 90.13 & 92.28 \\
layer4.0.conv1.0        & 76.41 & 80.41 & 86.30 & 90.59 & 92.95 \\
layer4.0.conv2.0        & 80.60 & 83.84 & 88.43 & 92.07 & 93.92 \\
layer4.0.downsample.0.0 & 49.57 & 55.31 & 63.96 & 71.63 & 75.94 \\
layer4.1.conv1.0        & 82.60 & 85.81 & 90.74 & 94.64 & 96.62 \\
layer4.1.conv2.0        & 77.17 & 79.68 & 84.09 & 88.96 & 92.32 \\
fc                      & 46.12 & 56.43 & 77.07 & 90.62 & 95.18 \\
\bottomrule
\end{tabular}
\end{table}
	
\end{document}